\documentclass{article}
\usepackage{iclr2022_conference,times}

\usepackage[T1]{fontenc}    
\usepackage{hyperref}       
\usepackage{url}            
\usepackage{booktabs}       
\usepackage{amsfonts}       
\usepackage{nicefrac}       
\usepackage{microtype}      
\usepackage{xcolor}         

\usepackage{amsmath,amsthm,amssymb} 
\usepackage{bm}
\usepackage{pgf}            

\iclrfinalcopy

	{\end{list}}
\makeatother

\newcounter{desccount}
\newcommand{\descitem}[1]{%
	\item[#1] \refstepcounter{desccount}\label{#1}
}
\newcommand{\descref}[1]{\hyperref[#1]{#1}}

\usepackage{verbatim} 

\usepackage{natbib}

\usepackage[textsize=footnotesize,disable]{todonotes} 

\newtheorem{lemma}{Lemma}
\newtheorem{prop}{Proposition}
\newtheorem{theorem}{Theorem}

\title{Joint Shapley values: a measure of joint feature importance}

\author{%
  Chris Harris \\
  Tokyo, Japan \\
  Raptor Financial Technologies \\
  \texttt{chrisharriscjh@gmail.com} \\
  \And
  Richard Pymar \\
  Economics, Mathematics \& Statistics \\
  Birkbeck College
  University of London, UK\\
  \texttt{r.pymar@bbk.ac.uk} \\
  \And
  Colin Rowat \\
  Economics \\
  University of Birmingham, UK \\
  \texttt{c.rowat@bham.ac.uk} \\
}

\begin{document}

\maketitle

\begin{abstract} 
  The \citeauthor{sha-53} value is one of the most widely used measures of feature importance partly as it measures a feature's average effect on a model's prediction.  We introduce joint Shapley values, which directly extend \citeauthor{sha-53}'s axioms and intuitions: joint Shapley values measure a set of features' average contribution to a model's prediction.  We prove the uniqueness of joint Shapley values, for any order of explanation.  Results for games show that joint Shapley values present different insights from existing interaction indices, which assess the effect of a feature within a set of features.  The joint Shapley values provide intuitive results in ML attribution problems.  With binary features, we present a presence-adjusted global value that is more consistent with local intuitions than the usual approach.
\end{abstract}

\section{Introduction} 

\emph{Game theory}'s Shapley value partitions the value arising from joint efforts among individual agents \citep{sha-53}.  Specifically, denote by $N=\{1,\ldots,n\}$ a set of agents, and by $\mathcal{G}^N$ the set of games on $N$, where a game is a set function $v$ from $2^N$ to $\mathbb{R}$ with $v(\varnothing)=0$.  Then $v(S)$ is the worth created by coalition $S\subseteq N$.  When there is no risk of confusion, we omit braces to indicate singletons (e.g.\ $i$ rather than $\left\{ i \right\}$) and denote a set's cardinality by the corresponding lower case letter (e.g.\ $s = \left| S \right|$).

For any agent $i$, \citeauthor{sha-53}'s value is then 
\begin{equation} \label{eq:shap}
	\psi_i \left( v \right) \equiv \sum_{S \subseteq N \backslash \left\{ i \right\}} \frac{s! \left( n-s-1 \right)!}{n!} \left[ v \left( S \cup i \right) - v \left( S \right) \right].
\end{equation}
This is the average worth that $i$ adds to possible coalitions $S$, weighted as follows: if the set of agents, $S$, has already `arrived', draw the next agent, $i$, to arrive uniformly over the remaining $N \backslash S$ agents.

\citeauthor{sha-53}'s value is widely used in \emph{explainable AI}'s \emph{attribution problem}, partitioning model predictions among individual features (q.v.\ \citet{st-ko-2014, lu-le-2017}) after the model has been trained.  Evaluating the prediction function at a specific feature value corresponds to an agent's presence; evaluating it at a reference (or baseline) feature value corresponds to the agent's absence.  A feature's \citeauthor{sha-53} value is its average marginal contribution to the model's predictions.

When features are correlated, individual measures of importance may mislead \citep{bhatt2020explainable, patel2021high}, as measures of individual significance such as the $t$-test do.  Thus, \citeauthor{sha-53}'s value has been extended to sets of features \citep{grabisch1999axiomatic, marichal2007axiomatic, alshebli2019measure, sundararajan2020shapley}.  As these extensions introduce axioms not present in \citeauthor{sha-53}, they do not preserve the \citeauthor{sha-53} value's intuition.

We extend \citeauthor{sha-53}'s axioms to sets of features, randomizing over sets rather than individual features.  The resulting joint Shapley value thus directly extends \citeauthor{sha-53}'s value to a measure of sets of feature importance: the average marginal contribution of a set of features to a model's predictions.

Our approach's novelty is seen in our extension of the null axiom: in \citet{sha-53}, a null agent contributes nothing to any set of agents to which it may belong; here, a null set contributes nothing to any set to which it may belong.  By contrast, \emph{interaction indices} \citep{grabisch1999axiomatic, alshebli2019measure, sundararajan2020shapley} recursively decompose sets into individual elements, retaining the original \citeauthor{sha-53} null axiom.  Thus, these indices measure sets' contributions relative to their constituent elements --- so are complementary to the joint Shapley value.

The generalised Shapley value \citep{marichal2007axiomatic} is closer to our work, but differs in a number of respects: our probabilities are independent of the size of the set of features under consideration\footnote{Measures of joint significance (e.g.\ $F$-tests) and model selection (e.g.\ BIC or AIC) penalize larger feature sets to avoid overfitting.  As joint Shapley values are calculated after model training, this problem does not arise.}; our efficiency axiom is fully joint, while theirs are based on singletons and pairs; our symmetry axiom provides uniqueness without reliance on recursion or partnership axioms.

To illustrate, consider a null coalition, $T$, whose individual members contribute positively to coalitions that they join. Interaction indices assign a negative value to $T$, as the individual members act discordantly together. However, from a joint feature importance viewpoint, $T$ should be assigned value 0. The joint Shapley value matches this intuition.

In the movie review application presented below, the joint Shapley values reveal contributions of collections of words, including grammatical features such as negation (\{disappointed\} versus \{won’t, disappointed\}), and adjectives (\{effort\} versus \{terrific, effort\}).

Like the Shapley-Taylor interaction index \citep{sundararajan2020shapley}, our efficiency axiom depends on a positive integer $k$, the \emph{order of explanation}, which limits the number of joint Shapley values to those for subsets up to cardinality $k$. As there are $2^N - 1$ non-empty subsets of $N$, the full set of joint Shapley values rapidly becomes unmanageable otherwise.  In practice, $k$ should be set to trade off insight (favouring higher $k$) against computational cost (favouring lower $k$).

For each $k$, the extended axioms yield a unique joint Shapley value.

Unlike in \citet{sha-53}, the joint anonymity and symmetry axioms are not interchangeable: each imposes distinct restrictions.

Section \ref{se:axioms} presents and extends the original \citeauthor{sha-53} axioms.  Section \ref{se:jsv} introduces joint Shapley values, deriving them as the unique solution to the extended axioms.  Section \ref{se:app} illustrates joint Shapley values in the game theoretical environment and applies them to the Boston housing \citep{harrison1978hedonic} and Rotten Tomatoes movie review \citep{Pang+Lee:05a} datasets, comparing them to interaction indices and presenting a sampling technique to facilitate calculation.  Section \ref{se:conc} concludes.  The Appendix collects proofs and other supplemental material.

\section{Extending \citeauthor{sha-53}'s axioms} \label{se:axioms}

For a game $v\in\mathcal{G}^N$, and a permutation $\sigma$ on $N$, denote a \emph{permuted game} by $\sigma v\in\mathcal{G}^N$ such that $\sigma v(\sigma(S))=v(S), \forall\,S\subseteq N$, where $\sigma(S)=\{\sigma(i):\,i\in S\}$. 
An index $\phi(v)$ of the game $v\in\mathcal{G}^N$ is any real-valued function on $2^N$.

The original axioms uniquely satisfied by the (standard) Shapley value $\psi$, are:
\begin{description}
	\descitem{LI} \emph{linearity} \label{ax:li}: $\psi$ is a linear function on $\mathcal{G}^N$, i.e.\ $\psi(v+w)=\psi(v)+\psi(w)$ and $\psi(av)=a\psi(v)$ for any $v,w\in\mathcal{G}^N$ and $a\in\mathbb{R}$.
	
	\descitem{NU} \emph{null} \label{ax:nu}: An agent that adds no worth to any coalition has no value, i.e.\ if $v(S\cup\{i\})=v(S)$ for all $S\subseteq N\setminus\{i\}$, then $\psi_i(v)=0$. This axiom is sometimes called \emph{dummy}.
	
	\descitem{EF} \emph{efficiency} \label{ax:ef}: The sum of the values of all agents is equal to the worth of the entire set, i.e.\ for all $v\in \mathcal{G}^N$, $\sum_{i=1}^n\psi_i(v)=v(N)$.
	
	\descitem{AN} \emph{anonymity} \label{ax:an}: For any $\sigma$ on $N$ and any $v\in\mathcal{G}^N$, $\psi_i(v)=\psi_{\sigma(i)}(\sigma v)$, for all $i\in N$. 
	
	\descitem{SY} \emph{symmetry} \label{ax:sy}: If two agents add equal worth to all coalitions that they can both join then they receive equal value: if $v(S\cup\{i\})=v(S\cup\{j\}) \forall S\subseteq N\setminus\{i,j\}$ then $\psi_i(v)=\psi_j(v)$. This is strictly weaker than anonymity.
\end{description}

Now extend each of these axioms in natural ways to conditions on sets rather than singletons. Below, $\phi_S(v)$ denotes an index for coalition $S$ on game $v$.
\begin{description}
	\descitem{JLI} \emph{joint linearity} \label{ax:jli}: $\phi$ is a linear function on $\mathcal{G}^N$, i.e.\ $\phi(v+w)=\phi(v)+\phi(w)$ and $\phi(av)=a\phi(v)$ for any $v,w\in\mathcal{G}^N$ and $a\in\mathbb{R}$. (This axiom has not been modified.)
	
	\descitem{JNU} \emph{joint null} \label{ax:jnu}: A \emph{coalition} that adds no worth to any coalition has no value, i.e.\ if $v(S\cup T)=v(S)$ for all $S\subseteq N\setminus T$, then $\phi_T(v)=0$.
	
	\descitem{JEF} \emph{joint efficiency} \label{ax:jef}: The sum of the values of all \emph{coalitions} up to cardinality $k$ is equal to the worth of the entire set, i.e.\ for all $v\in\mathcal{G}^N$, \[\sum_{\substack{\varnothing\neq T\subseteq N:\\|T|\le k}}\phi_T(v)=v(N).\]
	
	\descitem{JAN} \emph{joint anonymity} \label{ax:jan}: For any $\sigma$ on $N$ and any $v\in\mathcal{G}^N$, $\phi_T(v)=\phi_{\sigma(T)}(\sigma v)$, for all $T\subseteq N$.
	
	\descitem{JSY} \emph{joint symmetry} \label{ax:jsy}: If two \emph{coalitions} perform equally when joining coalitions that they can both join \emph{and for other coalitions they add no worth} then they receive an equal value, i.e.\ if 
	\begin{enumerate}
		\item $v(S\cup T)=v(S\cup T')$ for all $S\subseteq N\setminus(T\cup T')$,
		\item $v(S\cup T)=v(S)$ for all $S\subseteq N\setminus T$ such that $S\cap T'\neq\varnothing$,
		\item $v(S\cup T')=v(S)$ for all $S\subseteq N\setminus T'$ such that $S\cap T\neq\varnothing$,
	\end{enumerate}
	then $\phi_T(v)=\phi_{T'}(v)$.
\end{description}

Axiom \descref{JSY} only equates the joint Shapley values for coalitions $T$ and $T'$ if they contribute identically to coalitions that they may both join, and contribute nothing to the other coalitions. 
Axioms \descref{JLI}, \descref{JEF} and \descref{JAN} are all also used in \citet{sundararajan2020shapley}. Our joint null and joint symmetry notions appear to be new: they reflect our interest in a set of features' contribution to a model's predictions, so that the set's cardinality should not play a role in determining its value.

\section{Joint Shapley values} \label{se:jsv}

Our main result is that there is a unique solution to axioms \descref{JLI}, \descref{JNU}, \descref{JEF}, \descref{JAN} and \descref{JSY}, the joint Shapley value. The uniqueness is up to the $k^{\textnormal{th}}$ order of explanation; we say nothing about $|T|>k$.

\begin{theorem}\label{T:main}
	For each order of explanation $k\in\{1,\ldots,n\}$, there is a unique (up to the $k^{\textnormal{th}}$ order of explanation) index $\phi^J$ which satisfies axioms \descref{JLI}, \descref{JNU}, \descref{JEF}, \descref{JAN} and \descref{JSY}. It has the form
	\[
	\phi^J_T(v)=\sum_{S\subseteq N\setminus T} 
	q_{|S|}
	[v(S\cup T)-v(S)]
	\]
	for each $\varnothing\neq T\subseteq N$ with $|T|\le k$, where $(q_0,\ldots,q_{n-1})$ uniquely solves the recursive system
	\begin{equation} \label{e:qalt}
		q_0=\frac1{\sum_{i=1}^k\binom{n}{i}},\quad q_r=\frac{\sum_{s=(r-k)\vee0}^{r-1}\binom{r}{s}q_s}{\sum_{s=1}^{k\wedge(n-r)}\binom{n-r}{s}};
	\end{equation}
	for all $r \in {1, \ldots, n-1}$.
	
\end{theorem}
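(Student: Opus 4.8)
The plan is to proceed in two directions: first show that any index of the claimed form satisfies all five axioms (existence), then show the axioms force the index into that form (uniqueness). For existence, \descref{JLI} is immediate since $\phi^J_T$ is linear in $v$. Axiom \descref{JNU} holds because if $v(S\cup T)=v(S)$ for all $S\subseteq N\setminus T$, every bracket $[v(S\cup T)-v(S)]$ vanishes. Axiom \descref{JAN} follows from relabelling the summation index $S\mapsto\sigma(S)$ and using the definition of the permuted game. For \descref{JSY}, I would check that under hypotheses (1)--(3) the sums defining $\phi^J_T(v)$ and $\phi^J_{T'}(v)$ match term-by-term after splitting the sum over $S$ according to whether $S$ meets $T'$ (resp.\ $T$): on the part disjoint from both, hypothesis (1) equates the brackets; on the part meeting the other coalition, hypotheses (2)--(3) make the brackets vanish on both sides. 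The substantive computation is \descref{JEF}: I would substitute the formula into $\sum_{\varnothing\neq T,\,|T|\le k}\phi^J_T(v)$, swap the order of summation to collect the coefficient of each $v(U)$, and verify it equals $v(N)$. This reduces to a combinatorial identity in the $q_r$'s; the point of the recursion \eqref{e:qalt} is precisely that the coefficient of every $v(U)$ with $U\neq N$ cancels while the coefficient of $v(N)$ is $1$.

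For uniqueness, the standard Shapley strategy is to use a basis of $\mathcal{G}^N$. I would take the unanimity games $\{u_R:\varnothing\neq R\subseteq N\}$, where $u_R(S)=\mathbf{1}[R\subseteq S]$; by \descref{JLI} it suffices to pin down $\phi_T(u_R)$ for all $T$ (with $|T|\le k$) and all $R$. Here I expect the analysis to split on the relationship between $T$ and $R$. If $T\not\subseteq R$ then $T$ contributes nothing to coalitions built inside $R$, and more precisely one can argue via \descref{JNU} (applied to a suitable restriction or to the elements of $T\setminus R$, which are null in $u_R$) that $\phi_T(u_R)=0$. Among the $T\subseteq R$, the symmetry axiom \descref{JSY} should force $\phi_T(u_R)$ to depend only on $|T|$ and $|R|$: any two subsets $T,T'\subseteq R$ of the same size contribute identically to coalitions they can both join inside $R$, and contribute nothing to coalitions meeting the other — one must check hypotheses (1)--(3) of \descref{JSY} hold for $u_R$. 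Finally \descref{JEF} applied to $u_R$ gives one linear equation per $R$ relating these size-classes, and as $|R|$ ranges over $\{1,\dots,n\}$ this yields exactly enough equations to solve for the common values, which after a change of variables become the $q_r$. One then checks this linear system has a unique solution, equivalently that the recursion \eqref{e:qalt} is well-posed: each denominator $\sum_{s=1}^{k\wedge(n-r)}\binom{n-r}{s}$ is strictly positive for $r\le n-1$, so $q_r$ is determined from $q_0,\dots,q_{r-1}$, and $q_0$ is fixed outright.

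The main obstacle is the bookkeeping in the \descref{JEF} verification and its uniqueness counterpart: showing that the recursion \eqref{e:qalt} is exactly the cancellation condition. Concretely, when one expands $\sum_{|T|\le k}\phi^J_T(v)$ and collects the coefficient of $v(U)$ for a fixed $U\subsetneq N$, one gets a signed sum over pairs $(S,T)$ with $S\cup T=U$ or $S=U$; grouping by $r=|U|$ (or by the size of the relevant overlap) turns the vanishing of this coefficient into the stated relation between $q_r$ and the lower $q_s$. Getting the index ranges right — the $(r-k)\vee 0$ lower limit reflecting $|T|\le k$, and the $k\wedge(n-r)$ upper limit — is the delicate part, but it is a finite identity rather than anything deep. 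I would handle it by fixing $U$, parametrising $T\subseteq N$ with $T\cap U$ and $T\setminus U$ chosen independently, and summing a binomial series; the identity $\sum_j\binom{m}{j}(-1)^j=0$ for $m\ge1$ does most of the work, with the truncation at $k$ accounting for the asymmetry between the two limits.
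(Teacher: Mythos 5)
Your existence direction is essentially the paper's: linearity and \descref{JNU} are immediate from the representation, \descref{JAN} follows because the coefficients depend only on $|S|$, \descref{JSY} is checked by splitting the sum over $S$ according to whether it meets the other coalition, and \descref{JEF} reduces to collecting the coefficient of each $v(U)$. One small correction there: the cancellation condition for $U\neq N$ is the non-alternating identity $\sum_{t=1}^{k\wedge u}\binom{u}{t}q_{u-t}=q_u\sum_{t=1}^{k\wedge(n-u)}\binom{n-u}{t}$, i.e.\ exactly the recursion \eqref{e:qalt}, not an alternating binomial sum; and one still has to check that the stated value of $q_0$ is consistent with the normalisation $\sum_{i=n-k}^{n-1}\binom{n}{i}q_i=1$, which the paper gets for free from an arrival-order interpretation.

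The uniqueness argument has a genuine gap: the unanimity basis does not interact with these axioms the way it does with \citeauthor{sha-53}'s. First, $\phi_T(u_R)=0$ is false in general for $T\not\subseteq R$: \descref{JNU} applies to $u_R$ only when $T\cap R=\varnothing$, and if $T\cap R\neq\varnothing$ but $T\not\subseteq R$ then $T$ can complete $R$ (for $n=3$, $R=\{1,2\}$, $T=\{2,3\}$ one computes $\phi^J_T(u_R)=q_1>0$), so these values are not killed and must be determined some other way. Second, for distinct $T,T'\subseteq R$ of equal size, hypothesis (2) of \descref{JSY} fails for $u_R$: taking $S=R\setminus T$ gives $S\cap T'=T'\setminus T\neq\varnothing$ yet $u_R(S\cup T)=1\neq 0=u_R(S)$. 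The deliberately weakened \descref{JSY} (this is precisely the role of its conditions 2 and 3, cf.\ the non-existence result under \descref{SJS}) applies to very few games --- essentially the single-coalition indicator games $w_R$ with $R\subseteq N\setminus(T\cup T')$ and the game $w^*$ --- and unanimity games are not among them. Third, you never invoke \descref{JAN} in the uniqueness step, but it is needed: \descref{JSY} only yields $p^T(S)=p^{T'}(S)$ for $S\subseteq N\setminus(T\cup T')$, leaving the coefficients for large $S$ unconstrained across different $T$, while \descref{JAN} (via an induction using cardinality-threshold games) yields $p^T(S)=p^{T'}(S')$ whenever $|S|=|S'|$ and $|T|=|T'|$; only the two together force $p^T(S)=q_{|S|}$. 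Your equation count reflects the problem: \descref{JEF} on unanimity games gives one equation per value of $|R|$, i.e.\ $n$ equations, against order $nk$ unknowns indexed by $(|T|,|R|)$. The route that works is the paper's: use \descref{JLI}+\descref{JNU} to obtain the form $\sum_{S\subseteq N\setminus T}p^T(S)[v(S\cup T)-v(S)]$, then evaluate the remaining axioms on the indicator games $w_R$ to pin down the $p^T(S)$.
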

When $k=1$, the joint Shapley values coincide with \citeauthor{sha-53}'s values.\footnote{The Shapley-Taylor interaction index, $\phi^{ST}$, also has this property \citep{sundararajan2020shapley}.} When $k=n$, we have 
\begin{align*}
	q_r=\sum_{j=0}^r \binom{r}{j} \frac{\left( -2 \right)^{r-j}}{2^{n-j}-1},\quad\forall\,r\in\{0,\ldots,n-1\}.
\end{align*} 

For each $k$, the constants $q_r$ are non-negative and satisfy
$\sum_{s=n-k}^{n-1}\binom{n}{s}q_s=1$.
Further, as the joint Shapley value is similar in form to equation \eqref{eq:shap}'s standard Shapley value, the value of coalition $T$ depends on its marginal contribution to other coalitions; unlike interaction indices, it does not depend on the worth of its constituent agents.

As with the Shapley value, the joint Shapley value can be seen as the worth brought by `arriving' agents but, rather than arriving one at a time, they can now also arrive in coalitions.  We develop this interpretation in Appendix \ref{S:arrival}.

We show here the implication of each of the joint axioms introduced above and prove Theorem~\ref{T:main}. 

It is already known that joint linearity restricts measures to be linear combinations of worths:
\begin{lemma}[\citet{grabisch1999axiomatic}, Proposition 1]\label{L:decomp}
	If $\phi$ satisfies \descref{JLI}, then for every $\varnothing \neq T\subseteq N$ there exists a family of real constants $\{a_T^S\}_{S\subseteq N}$ such that for every $v\in\mathcal{G}^N$,
	\[
	\phi_T(v)=\sum_{S\subseteq N}a_S^T v(S).
	\]
\end{lemma}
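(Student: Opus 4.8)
The plan is to use the fact that $\mathcal{G}^N$ is a finite-dimensional real vector space on which \descref{ax:jli} forces each coordinate map $v\mapsto\phi_T(v)$ to be a linear functional; such a functional is automatically a linear combination of the coordinate evaluations $v\mapsto v(S)$.

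First I would fix the basis of $\mathcal{G}^N$ given by the \emph{Dirac games} $\delta_R$ for $\varnothing\neq R\subseteq N$, where $\delta_R(S)=1$ if $S=R$ and $\delta_R(S)=0$ otherwise. Each $\delta_R$ satisfies $\delta_R(\varnothing)=0$, so it lies in $\mathcal{G}^N$; and since a game is determined by its $2^n-1$ values on the nonempty subsets of $N$, and $\mathcal{G}^N$ is closed under the addition and scalar multiplication appearing in \descref{ax:jli} (the constraint $v(\varnothing)=0$ being preserved), these $2^n-1$ games form a basis. Hence every $v\in\mathcal{G}^N$ decomposes uniquely as $v=\sum_{\varnothing\neq R\subseteq N}v(R)\,\delta_R$.

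Next I would apply \descref{ax:jli} --- which supplies both additivity and scalar homogeneity, hence full linearity of $\phi$ and so of each component $\phi_T$ --- to this decomposition, obtaining $\phi_T(v)=\sum_{\varnothing\neq R\subseteq N}v(R)\,\phi_T(\delta_R)$. Setting $a_R^T:=\phi_T(\delta_R)$ for $\varnothing\neq R\subseteq N$ and $a_\varnothing^T:=0$ then gives $\phi_T(v)=\sum_{S\subseteq N}a_S^T\,v(S)$, the $S=\varnothing$ term vanishing because $v(\varnothing)=0$. This is the asserted form, with the constants $a_S^T$ depending only on $\phi$ and not on $v$.

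There is essentially no obstacle here: the result is a routine application of finite-dimensional linear algebra. The only points needing any care are verifying that $\mathcal{G}^N$ is genuinely a vector space under the operations named in \descref{ax:jli}, so that the basis argument is legitimate, and observing that ``linear'' in that axiom bundles additivity with homogeneity, which together let one pass $\phi_T$ through an arbitrary finite linear combination of games.
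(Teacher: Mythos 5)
Your proof is correct. The paper does not actually prove this lemma itself --- it imports it verbatim as Proposition~1 of Grabisch and Roubens (1999) --- and your argument (decompose $v$ over the basis of Dirac games $\delta_R$, push $\phi_T$ through the finite linear combination via \descref{JLI}, and read off $a_R^T=\phi_T(\delta_R)$) is exactly the standard proof of that cited result; note also that your $\delta_R$ are the same ``identity games'' $w_R$ that the paper deploys repeatedly in its subsequent proofs, so your route is fully consistent with the paper's machinery.
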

Axiom \descref{JNU} then constrains the values of the constants $\{a_S^T\}$:
\begin{lemma}\label{L:aterms}
	Suppose $\phi$ satisfies \descref{JLI} and \descref{JNU} and let $\{a_T^S\}_{S\subseteq N,\,\varnothing\neq T\subseteq N}$ be the constants from Lemma~\ref{L:decomp}. Then for every $\varnothing\neq T\subseteq N$ and $\varnothing\neq S\subseteq N\setminus T$, $a_S^T=-a_{S\cup T}^T$. Further, for every $\varnothing\neq T\subseteq N$, $ S\subseteq N\setminus T$, and $\varnothing\neq H\subsetneq T$,  $a_{S\cup H}^T=0$.
\end{lemma}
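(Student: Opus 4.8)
The plan is to feed axiom \descref{JNU} a sufficiently rich family of games for which the coalition $T$ is null, and then read off constraints on the constants $\{a^T_S\}$ of Lemma~\ref{L:decomp} by exploiting that the worths $v(R)$ of distinct nonempty $R\subseteq N$ can be prescribed independently.

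First I would characterise the games for which $T$ is null. Writing each $R\subseteq N$ as $R=S\cup U$ with $S=R\cap(N\setminus T)$ and $U=R\cap T$, the hypothesis of \descref{JNU} --- namely $v(S\cup T)=v(S)$ for all $S\subseteq N\setminus T$ --- is precisely the requirement that $v(R)=v(R\setminus T)$ whenever $T\subseteq R$, while the worths $v(R)$ for the remaining nonempty $R$ (those with $T\not\subseteq R$) are unconstrained. For such a $v$, Lemma~\ref{L:decomp} gives
\[
\phi_T(v)=\sum_{S\subseteq N\setminus T}\ \sum_{U\subseteq T}a^T_{S\cup U}\,v(S\cup U)=\sum_{S\subseteq N\setminus T}\Bigl[\bigl(a^T_S+a^T_{S\cup T}\bigr)v(S)+\sum_{\varnothing\neq U\subsetneq T}a^T_{S\cup U}\,v(S\cup U)\Bigr],
\]
where I substituted $v(S\cup T)=v(S)$ and combined the $U=T$ and $U=\varnothing$ terms, and used $v(\varnothing)=0$ to discard the $S=\varnothing$, $U=\varnothing$ term.

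Next I would pin down the constants by a choice of test games. Fix $\varnothing\neq R\subseteq N$ with $T\not\subseteq R$, and take $v$ to be (essentially) the indicator of $R$: set $v(R)=1$, additionally set $v(R\cup T)=1$ in the case $R\subseteq N\setminus T$ so as to satisfy the null hypothesis, and set all other worths to $0$. Then $T$ is null for $v$, so \descref{JNU} forces $\phi_T(v)=0$. Feeding this $v$ into the displayed identity: if $R\cap T\neq\varnothing$, say $R=S\cup H$ with $\varnothing\neq H\subsetneq T$, only the term $a^T_R$ survives, yielding $a^T_{S\cup H}=0$; and if $\varnothing\neq R\subseteq N\setminus T$, only the term $\bigl(a^T_R+a^T_{R\cup T}\bigr)v(R)$ survives, yielding $a^T_R+a^T_{R\cup T}=0$, i.e.\ $a^T_R=-a^T_{R\cup T}$. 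Letting $R$ range over all admissible sets gives both claimed identities. (Equivalently, and more slickly: since the worths $\{v(R):\ R\neq\varnothing,\ T\not\subseteq R\}$ are free parameters and $v(\varnothing)=0$ is fixed, every coefficient in the bracketed expression above must vanish identically, which is exactly the two stated relations.)

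The only real subtlety --- and the step I would be most careful about --- is the bookkeeping around the case $T\subseteq R$: one must recognise that the null hypothesis for $T$ constrains only the pairs $v(S\cup T)$ versus $v(S)$, so that $a^T_{S\cup T}$ is never killed outright but merely tied to $a^T_S$, and that the fixed value $v(\varnothing)=0$ is precisely why the first identity is asserted for $\varnothing\neq S$ rather than for all $S\subseteq N\setminus T$. No uniqueness of the decomposition in Lemma~\ref{L:decomp} is required: I only use that some family $\{a^T_S\}$ exists, and these linear relations then hold for it.
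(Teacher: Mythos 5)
Your proof is correct and follows essentially the same route as the paper: both arguments apply \descref{JNU} to indicator-type games (the indicator of $R$ together with $R\cup T$ for $R\subseteq N\setminus T$, and the indicator of $S\cup H$ for $\varnothing\neq H\subsetneq T$) and read off the constraints on the $\{a^T_S\}$ from Lemma~\ref{L:decomp}; your ``free parameters'' reformulation is just a uniform packaging of the paper's two test-game families, including the same careful handling of $v(\varnothing)=0$ that restricts the first identity to $S\neq\varnothing$.
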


SCombining these two lemmas yields:
\begin{prop}\label{P:probdecomp}
	Suppose $\phi$ satisfies \descref{JLI} and \descref{JNU}. Then there exist constants $\{p^T(S)\}$ that depend on $T$ and $S$ such that for every $\varnothing\neq T\subseteq N$ and $v\in\mathcal{G}^N$
	\begin{equation} \label{eq:spv}
		\phi_T(v)=\sum_{S\subseteq N\setminus T}p^T(S)[v(S\cup T)-v(S)].
	\end{equation}
\end{prop}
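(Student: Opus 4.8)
The plan is to start from Lemma~\ref{L:decomp}, which gives $\phi_T(v)=\sum_{S\subseteq N}a_S^T v(S)$ for constants $a_S^T$ determined by \descref{JLI}, and then feed in the constraints on these constants supplied by Lemma~\ref{L:aterms}, which follow from \descref{JNU}. The target form \eqref{eq:spv} only involves sets $S$ disjoint from $T$ (through the terms $v(S\cup T)$ and $v(S)$), so the first task is to show that every coefficient $a_S^T$ for an $S$ that is neither disjoint from $T$ nor of the form ``(something disjoint from $T$) $\cup\, T$'' in fact vanishes.

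Concretely, I would partition the index set $\{S:S\subseteq N\}$ according to the trace $H:=S\cap T$. If $H=\varnothing$ then $S\subseteq N\setminus T$; if $H=T$ then $S=S'\cup T$ with $S'=S\setminus T\subseteq N\setminus T$; and if $\varnothing\neq H\subsetneq T$ then, writing $S=S''\cup H$ with $S''=S\setminus T\subseteq N\setminus T$, the second clause of Lemma~\ref{L:aterms} gives $a_{S''\cup H}^T=0$, so all such terms drop out of the sum. After discarding these, we are left with
\[
\phi_T(v)=\sum_{S\subseteq N\setminus T}a_S^T\,v(S)\;+\;\sum_{S\subseteq N\setminus T}a_{S\cup T}^T\,v(S\cup T).
\]
Now apply the first clause of Lemma~\ref{L:aterms}, $a_S^T=-a_{S\cup T}^T$ for $\varnothing\neq S\subseteq N\setminus T$, together with the $S=\varnothing$ case: since $v(\varnothing)=0$ by definition of a game, the term $a_\varnothing^T v(\varnothing)$ is zero regardless, so we may freely set $p^T(\varnothing):=a_T^T$ and note that the pairing still reads $\phi_T(v)=\sum_{S\subseteq N\setminus T}a_{S\cup T}^T\,[v(S\cup T)-v(S)]$. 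Defining $p^T(S):=a_{S\cup T}^T$ for each $S\subseteq N\setminus T$ then yields exactly \eqref{eq:spv}.

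The only mild subtlety — and the step I would treat most carefully — is the bookkeeping at $S=\varnothing$: Lemma~\ref{L:aterms}'s relation $a_S^T=-a_{S\cup T}^T$ is stated for $\varnothing\neq S$, so one cannot directly pair $a_\varnothing^T v(\varnothing)$ with $a_T^T v(T)$ via that identity. The resolution is simply that $v(\varnothing)=0$, so the $S=\varnothing$ contribution to $\phi_T(v)$ is just $a_T^T v(T)=p^T(\varnothing)[v(T)-v(\varnothing)]$, consistent with the claimed formula. Everything else is a routine reindexing of the finite sum, so no genuine obstacle arises; the content of the proposition is entirely carried by the two preceding lemmas.
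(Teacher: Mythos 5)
Your proposal is correct and follows essentially the same route as the paper's proof: decompose the sum from Lemma~\ref{L:decomp} according to the trace $S\cap T$, kill the $\varnothing\neq H\subsetneq T$ terms via the second clause of Lemma~\ref{L:aterms}, pair the remaining terms via $a_S^T=-a_{S\cup T}^T$, and set $p^T(S)=a_{S\cup T}^T$. Your explicit handling of the $S=\varnothing$ case (using $v(\varnothing)=0$, since the first clause of Lemma~\ref{L:aterms} is stated only for nonempty $S$) is a point the paper passes over silently, and it is the right way to close that small gap.
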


Now establish a condition on the $\{p^T(S)\}$ values which must be satisfied under \descref{JEF}:

\begin{prop} \label{P:jef}
	For each order of explanation $k$, $\phi$ satisfies axioms \descref{JLI}, \descref{JNU} and \descref{JEF} if and only if for every $\varnothing \neq T \subseteq N$ with $|T|\le k$ and $v\in\mathcal{G}^N$, $\phi_T(v)=\sum_{S\subseteq N\setminus T }p^T(S)[v(S\cup T)-v(S)]$ with $\left\{ p^T \left( S \right) \right\}$ satisfying
	\begin{align} \label{e:probeqn}
		\delta_N (S) = \sum_{\substack{\varnothing \neq T\subseteq S:\\|T|\le k}}p^T(S\setminus T)-\sum_{\substack{\varnothing\neq T\subseteq N\setminus S:\\|T|\le k}}p^T(S),
	\end{align}
	for all $\varnothing\neq S\subseteq N$, where $\delta_N(S)$ equals 1 if $S=N$ and 0 otherwise.
\end{prop}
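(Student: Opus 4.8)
The plan is to combine Proposition~\ref{P:probdecomp} with a change of basis on $\mathcal{G}^N$, reducing the \descref{JEF} condition to a finite linear system.

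For the forward implication I would invoke Proposition~\ref{P:probdecomp}: if $\phi$ satisfies \descref{JLI} and \descref{JNU} then for every nonempty $T\subseteq N$ there are constants $\{p^T(S)\}$ with $\phi_T(v)=\sum_{S\subseteq N\setminus T}p^T(S)[v(S\cup T)-v(S)]$, and in particular this holds for all $T$ with $|T|\le k$. Substituting this into the \descref{JEF} identity $\sum_{\varnothing\neq T\subseteq N,\,|T|\le k}\phi_T(v)=v(N)$ produces a linear identity in $v$ that must hold throughout $\mathcal{G}^N$. Since the indicator games $\{e_R\}_{\varnothing\neq R\subseteq N}$, defined by $e_R(S)=1$ if $S=R$ and $0$ otherwise, form a basis of $\mathcal{G}^N$, it suffices to impose the identity at each $v=e_R$.

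The core computation is then the evaluation at $v=e_R$. The right-hand side is $v(N)=\delta_N(R)$. On the left, in the summand indexed by $(T,S)$ with $T\neq\varnothing$, $|T|\le k$ and $S\subseteq N\setminus T$, disjointness of $S$ and $T$ forces $v(S\cup T)=1$ exactly when $T\subseteq R$ and $S=R\setminus T$, while $v(S)=1$ requires $S=R$, which is compatible with $S\subseteq N\setminus T$ only when $T\subseteq N\setminus R$. Collecting the surviving terms gives precisely \eqref{e:probeqn} with $S$ relabelled as $R$, and letting $R$ range over all nonempty subsets recovers \eqref{e:probeqn} in full. For the converse I would run this argument backwards: the explicit form is manifestly linear in $v$, so it satisfies \descref{JLI}; each summand vanishes when $v(S\cup T)=v(S)$ for all $S\subseteq N\setminus T$, so it satisfies \descref{JNU}; and \eqref{e:probeqn} is exactly the statement that the \descref{JEF} identity holds at every $e_R$, hence at every $v\in\mathcal{G}^N$ by linearity.

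I expect the only delicate point to be the bookkeeping in the $e_R$ evaluation --- correctly tracking how the constraint $S\subseteq N\setminus T$ (hence $S\cap T=\varnothing$) interacts with $S\cup T=R$ and with $S=R$ to produce the two index sets $\{\varnothing\neq T\subseteq R\}$ and $\{\varnothing\neq T\subseteq N\setminus R\}$ appearing in \eqref{e:probeqn}, with the cardinality cap $|T|\le k$ simply riding along unchanged. (As in Theorem~\ref{T:main}, the characterization is understood up to the $k^{\textnormal{th}}$ order of explanation, i.e.\ nothing is asserted about $\phi_T$ for $|T|>k$.) The remainder is routine linear algebra.
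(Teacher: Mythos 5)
Your proposal is correct and follows essentially the same route as the paper's proof: apply Proposition~\ref{P:probdecomp}, test the \descref{JEF} identity on the indicator games $w_R$ (your $e_R$), track exactly when $w_R(S\cup T)-w_R(S)$ equals $+1$ (namely $\varnothing\neq T\subseteq R$, $S=R\setminus T$) or $-1$ (namely $S=R$, $\varnothing\neq T\subseteq N\setminus R$), and for the converse note that \descref{JLI} and \descref{JNU} are immediate from the explicit form while \eqref{e:probeqn} gives the coefficient of each $v(R)$ in the \descref{JEF} sum. No gaps.
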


Recall that symmetry axiom \descref{SY} is strictly weaker than anonymity axiom \descref{AN} \citep{malawski2020note}. This is not the case for their joint counterparts: each imposes a different constraint on the constants $\{p^T(S)\}$, as we shall see here.  First, consider the effect of imposing \descref{JAN}.

\begin{prop} \label{prop:janchar}
	For each order of explanation $k$, $\phi$ satisfies axioms \descref{JLI}, \descref{JNU}, \descref{JEF} and \descref{JAN} if and only if for every $\varnothing \neq T \subseteq N$ and $v\in\mathcal{G}^N$, $\phi_T(v)=\sum_{S\subseteq N\setminus T }p^T(S)[v(S\cup T)-v(S)]$ with $\left\{ p^T \left( S \right) \right\}$ satisfying \eqref{e:probeqn} and 
	\begin{equation} 
		p^T \left( S \right) = p^{T'} \left( S' \right) \forall\, \varnothing\neq T,T'\subseteq N,\,S \subseteq N \setminus T,\, S' \subseteq N \setminus T' \text{ s.t. } s = s', t = t'. \label{eq:star}
	\end{equation}
\end{prop}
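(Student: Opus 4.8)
The plan is to prove the two directions of the ``if and only if'' separately, building on Proposition~\ref{P:jef} which already handles \descref{JLI}, \descref{JNU} and \descref{JEF}. For the forward direction, suppose $\phi$ additionally satisfies \descref{JAN}. By Proposition~\ref{P:jef} we may write $\phi_T(v)=\sum_{S\subseteq N\setminus T}p^T(S)[v(S\cup T)-v(S)]$ with the $\{p^T(S)\}$ satisfying \eqref{e:probeqn}; it remains to deduce \eqref{eq:star}. First I would observe that the decomposition in Proposition~\ref{P:probdecomp} (hence the constants $p^T(S)$) is \emph{unique}: since the $2^{|N\setminus T|}$ differences $v(S\cup T)-v(S)$ for $S\subseteq N\setminus T$ are obtained from a linearly independent family of games (e.g. the standard unanimity-type basis of $\mathcal{G}^N$), the $p^T(S)$ are uniquely determined by $\phi_T$. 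This uniqueness is what lets anonymity bite.

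Next, fix $\varnothing\neq T\subseteq N$ and $S\subseteq N\setminus T$, and let $\sigma$ be any permutation of $N$. Applying \descref{JAN} gives $\phi_T(v)=\phi_{\sigma(T)}(\sigma v)$. I would expand both sides using the formula from Proposition~\ref{P:jef}: the left side is $\sum_{S\subseteq N\setminus T}p^T(S)[v(S\cup T)-v(S)]$, and the right side, after substituting the definition $\sigma v(\sigma(R))=v(R)$ and reindexing $S\mapsto\sigma(S)$, becomes $\sum_{S\subseteq N\setminus T}p^{\sigma(T)}(\sigma(S))[v(S\cup T)-v(S)]$. By the uniqueness of the coefficients noted above, matching terms yields $p^T(S)=p^{\sigma(T)}(\sigma(S))$ for every $S\subseteq N\setminus T$ and every permutation $\sigma$. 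Since, given any $T,T'$ with $|T|=|T'|$ and any $S\subseteq N\setminus T$, $S'\subseteq N\setminus T'$ with $|S|=|S'|$, one can choose $\sigma$ with $\sigma(T)=T'$ and $\sigma(S)=S'$ (possible precisely because $|T\cup S|=|T'\cup S'|$ and $|T|=|T'|$), this gives exactly \eqref{eq:star}. I would spell out the construction of this $\sigma$ carefully, as it is the crux of the argument.

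For the converse, suppose $\phi_T(v)=\sum_{S\subseteq N\setminus T}p^T(S)[v(S\cup T)-v(S)]$ with $\{p^T(S)\}$ satisfying \eqref{e:probeqn} and \eqref{eq:star}. By Proposition~\ref{P:jef}, \eqref{e:probeqn} already guarantees \descref{JLI}, \descref{JNU} and \descref{JEF}, so only \descref{JAN} needs checking. I would take an arbitrary permutation $\sigma$ and compute $\phi_{\sigma(T)}(\sigma v)=\sum_{R\subseteq N\setminus\sigma(T)}p^{\sigma(T)}(R)[\sigma v(R\cup\sigma(T))-\sigma v(R)]$; reindexing via $R=\sigma(S)$ with $S\subseteq N\setminus T$, and using $\sigma v(\sigma(S)\cup\sigma(T))=\sigma v(\sigma(S\cup T))=v(S\cup T)$ and similarly $\sigma v(\sigma(S))=v(S)$, this equals $\sum_{S\subseteq N\setminus T}p^{\sigma(T)}(\sigma(S))[v(S\cup T)-v(S)]$. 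Now \eqref{eq:star} applies with $T'=\sigma(T)$, $S'=\sigma(S)$ (the cardinality conditions $|T|=|\sigma(T)|$, $|S|=|\sigma(S)|$ hold automatically), giving $p^{\sigma(T)}(\sigma(S))=p^T(S)$, so the sum collapses to $\phi_T(v)$, establishing \descref{JAN}.

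The main obstacle is the forward direction, specifically pinning down that the coefficients $p^T(S)$ are uniquely determined by $\phi_T$ so that \descref{JAN}'s functional identity can be read off coefficient-by-coefficient; once that linear-independence fact is in hand, both directions reduce to bookkeeping with the permutation action on subsets. A minor subtlety to handle with care is the role of the order-of-explanation bound $k$: the displayed equivalence is stated for all $\varnothing\neq T\subseteq N$, and I would note that \eqref{e:probeqn} only sums over $|T|\le k$, so the argument above is applied to each $T$ using the form already guaranteed by Proposition~\ref{P:jef}, with no extra constraint needed for $|T|>k$.
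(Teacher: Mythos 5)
Your proof is correct, but the forward direction takes a genuinely different route from the paper's. The paper never matches coefficients against an arbitrary game: it first proves $p^T(S)=p^T(S')$ for a fixed $T$ and $s=s'$ by evaluating $\phi_T$ at identity games $w_S$ and using a permutation that swaps $S$ and $S'$ while fixing $T$, and then runs a downward induction on $s$ (base case $s=n-t$ via the grand-coalition indicator, inductive step via cardinality-threshold games $x$), in each case invoking \descref{JAN} only for games satisfying $\sigma x=x$, so that the conclusion $\phi_T(x)=\phi_{T'}(x)$ is an equality of two explicit sums. You instead use the full functional identity $\phi_T(v)=\phi_{\sigma(T)}(\sigma v)$ for \emph{all} $v$, reindex, and read off $p^T(S)=p^{\sigma(T)}(\sigma(S))$ by uniqueness of the coefficients; combined with the existence of a permutation sending $(T,S)$ to $(T',S')$ whenever $t=t'$, $s=s'$ (which holds since the three blocks $T$, $S$, $N\setminus(T\cup S)$ have the same sizes as $T'$, $S'$, $N\setminus(T'\cup S')$), this gives \eqref{eq:star} in one step and eliminates the induction entirely. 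The one point you must nail down is the uniqueness claim: the functionals $v\mapsto v(S\cup T)-v(S)$, $S\subseteq N\setminus T$, are indeed linearly independent on $\mathcal{G}^N$ — evaluating at the identity games $w_R$ for $\varnothing\neq R\subseteq N\setminus T$ recovers $-p^T(R)$, and evaluating at $w_T$ recovers $p^T(\varnothing)$ (note $w_\varnothing$ is not a legal game since $v(\varnothing)=0$, so the empty set needs this separate evaluation) — which is exactly the device the paper itself uses to extract coefficients. Your converse direction coincides with the paper's. Net effect: your argument is shorter and more conceptual at the cost of one linear-algebra lemma; the paper's is longer but only ever evaluates $\phi$ at a few hand-picked symmetric games.
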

The analogous result for \descref{JSY} is:
\begin{prop}\label{prop:jsychar}
	For each order of explanation $k$, $\phi$ satisfies axioms \descref{JLI}, \descref{JNU}, \descref{JEF} and \descref{JSY} if and only if for every $\varnothing \neq T \subseteq N$ and $v\in\mathcal{G}^N$, $\phi_T(v)=\sum_{S\subseteq N\setminus T }p^T(S)[v(S\cup T)-v(S)]$ with $\left\{ p^T \left( S \right) \right\}$ satisfying \eqref{e:probeqn} and 
	\begin{equation} 
		p^T \left( S \right) = p^{T'} \left( S \right) \forall\, \varnothing\neq T,T'\subseteq N,\,S \subseteq N \setminus (T\cup T'). \label{eq:star2}
	\end{equation}
\end{prop}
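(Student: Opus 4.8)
The plan is to build on Proposition~\ref{P:jef}: under \descref{JLI}, \descref{JNU} and \descref{JEF} we already have the representation $\phi_T(v)=\sum_{S\subseteq N\setminus T}p^T(S)[v(S\cup T)-v(S)]$ with the constants obeying \eqref{e:probeqn}, so the whole task reduces to showing that, given this representation, axiom \descref{JSY} is equivalent to the constraint \eqref{eq:star2}.

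For the ``only if'' direction, fix $\varnothing\neq T,T'\subseteq N$ and $S\subseteq N\setminus(T\cup T')$ (the case $T=T'$ being trivial), and I would exhibit a single game on which \descref{JSY}'s three hypotheses hold and which reads off both $p^T(S)$ and $p^{T'}(S)$. The key observation is that any game whose value $v(R)$ depends only on the pair $\bigl(R\setminus(T\cup T'),\ \mathbf{1}[R\cap(T\cup T')\neq\varnothing]\bigr)$ automatically satisfies hypotheses (1)--(3): adjoining $T$ or $T'$ to any $R$ alters only the ``meets $T\cup T'$'' bit, and alters nothing at all when $R$ already meets $T\cup T'$. Within this family I would take $v$ with $v(R)=1$ exactly when $R\cap(T\cup T')\neq\varnothing$ and $R\setminus(T\cup T')=S$, and $v(R)=0$ otherwise; this is a legitimate game since $v(\varnothing)=0$. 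Evaluating the representation, for $R\subseteq N\setminus T$ the bracket $v(R\cup T)-v(R)$ vanishes whenever $R\cap T'\neq\varnothing$ and otherwise equals $\mathbf{1}[R=S]$, so the sum collapses to $\phi_T(v)=p^T(S)$; by the symmetry of the construction in $T$ and $T'$, $\phi_{T'}(v)=p^{T'}(S)$. Axiom \descref{JSY} then forces $p^T(S)=p^{T'}(S)$, which is \eqref{eq:star2}.

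For the ``if'' direction, assume the representation holds with the constants satisfying \eqref{e:probeqn} and \eqref{eq:star2}; Proposition~\ref{P:jef} already yields \descref{JLI}, \descref{JNU} and \descref{JEF}. Suppose the three \descref{JSY} hypotheses hold for $T,T'$. In $\phi_T(v)=\sum_{R\subseteq N\setminus T}p^T(R)[v(R\cup T)-v(R)]$ I would split the sum according to whether $R\cap T'=\varnothing$; hypothesis (2) kills every term with $R\cap T'\neq\varnothing$, leaving $\phi_T(v)=\sum_{R\subseteq N\setminus(T\cup T')}p^T(R)[v(R\cup T)-v(R)]$, and symmetrically $\phi_{T'}(v)=\sum_{R\subseteq N\setminus(T\cup T')}p^{T'}(R)[v(R\cup T')-v(R)]$. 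For each such $R$, \eqref{eq:star2} gives $p^T(R)=p^{T'}(R)$ and hypothesis (1) gives $v(R\cup T)=v(R\cup T')$, so the two expressions agree and $\phi_T(v)=\phi_{T'}(v)$.

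I expect the ``only if'' direction to be the main obstacle: the difficulty is choosing a family of games rich enough to separate $p^T(S)$ from $p^{T'}(S)$ yet constrained enough to meet all three \descref{JSY} hypotheses at once --- naive indicator games such as $v=\mathbf 1_{S\cup T}$ isolate the right coefficient but violate hypothesis (2) or (3). Once the ``depends only on $\bigl(R\setminus(T\cup T'),\ \mathbf1[R\cap(T\cup T')\neq\varnothing]\bigr)$'' family is identified, the remaining steps are routine bookkeeping, though I would still check the degenerate cases $S=\varnothing$ and $T=T'$.
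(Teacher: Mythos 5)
Your proof is correct and follows essentially the same route as the paper: reduce to Proposition~\ref{P:jef}, extract \eqref{eq:star2} by applying \descref{JSY} to test games satisfying all three of its hypotheses, and prove the converse by splitting the sum according to whether $S\cap T'=\varnothing$ and invoking \eqref{eq:star2} together with hypothesis (1). The only difference is in the choice of test game: your single family $v(R)=\mathbf{1}[R\cap(T\cup T')\neq\varnothing \text{ and } R\setminus(T\cup T')=S]$ handles $S=\varnothing$ uniformly and reads off $\phi_T(v)=p^T(S)$ directly, whereas the paper uses the identity games $w_R$ for $\varnothing\neq R\subseteq N\setminus(T\cup T')$ (yielding $-p^T(R)$) plus a separate game $w^*(R)=\mathbf{1}[R\neq\varnothing]$ to cover the coefficient $p^T(\varnothing)$.
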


Combining Propositions~\ref{P:jef}--\ref{prop:jsychar} completes the proof of Theorem~\ref{T:main}.

The computational complexity of deriving the $(q_0, \ldots, q_{n-1})$ is $\mathcal{O}(nk^2)$. Once these have been determined, the joint Shapley values can be calculated; the complexity of doing so is $\mathcal{O}(3^n\wedge (2^n n^k))$.

\section{Experiments} \label{se:app}

\subsection{Game theoretical}

We present two game theoretical models from \citet{sundararajan2020shapley} with known `ground truths'.  
For each, we compare the joint Shapley value, the Shapley interaction index \citep{grabisch1999probabilistic}, the generalised Shapley value \citep{marichal2007axiomatic}, the added-value index \citep{alshebli2019measure}, and the Shapley-Taylor interaction index \citep{sundararajan2020shapley}, respectively:
\begin{align*}
  \phi^{SI}_T \left( v \right) & \equiv \sum_{S \subseteq N \backslash T} \frac{1}{n-t+1} \binom{n-t}{s}^{-1} \sum_{L \subseteq T} \left( -1 \right)^{t-l} v \left( S \cup L \right), \forall T \subseteq N ; \\
  \phi^{GS}_T \left( v \right) & \equiv \sum_{S \subseteq N \backslash T} \frac{\left( n-s-t \right)! s!}{\left( n-t+1 \right)!} \left[ v \left( S \cup T \right) - v \left( S \right) \right]; \\
  \phi^{AV}_T \left( v \right) & \equiv v \left( T \right) - \sum_{i \in T} \frac{1}{2^{n-1}} \sum_{S \subseteq N : i \in S} \sum_{C \subseteq S \backslash i} \frac{c! \left( s - c - 1 \right)!}{s!} \left[ v \left( C \cup i \right) - v \left( C \right) \right] ; \\
  \phi^{ST}_T \left( v; k \right) & \equiv \begin{cases}
  	\sum_{L \subseteq T} \left( -1 \right)^{t-l} v \left( L \right) & \mbox{if } t < k \\
  	\frac{k}{n} \sum_{S \subseteq N \backslash T} \binom{n-1}{s}^{-1} \sum_{L \subseteq T} \left( -1 \right)^{t-l} v \left( S \cup L \right) & \mbox{if } t = k.
  \end{cases}
\end{align*}

\begin{table}[h]
  \caption{Joint and interaction measures in $n=3$ game theory examples}
  \label{ta:majority}
  \begin{center}
  	\resizebox{\textwidth}{!}
  	{
		\begin{tabular}{lrrrrrrrrr}
		  \toprule
		  & $v$ & Shapley & $\phi^{SI}$ & $\phi^{GS}$ & $\phi^{AV}$ & \multicolumn{2}{c}{$\phi^{ST} \left( v_m; k \right)$} & \multicolumn{2}{c}{$\phi^J \left( v_m; k \right)$} \\
		  $T$   & & $\left( k=1 \right)$ & & & & $k=2$ & $k=3$                     & $k=2$ & $k=3$ \\
		  \midrule
		  \multicolumn{10}{c}{The majority game: $v_m \left( T \right) = 1$ when $t \ge 2$, and is equal to $0$ otherwise} \\
		  $i$ & $0$ & $\nicefrac{1}{3}$ & $\nicefrac{1}{3}$ & $\nicefrac{1}{3}$ & $-\nicefrac{1}{3}$ & $0$               & $0$ & $\nicefrac{1}{9}$  & $\nicefrac{2}{21}$ \\
		  $i,j$ & $1$ &                   & $0$ & $\nicefrac{1}{2} $& $\nicefrac{1}{3}$ & $\nicefrac{1}{3}$ & $1$ & $\nicefrac{2}{9}$  & $\nicefrac{4}{21}$ \\
		  $N$ & $1$ &                   & $-2$ & $1$ & $0$ &                   & $-2$&                    & $\nicefrac{3}{21}$ \\
		  \midrule
		  \multicolumn{10}{c}{A linear model with crosses: $v_c \left( T \right) = \sum_{i \in T} 1 + c \max \left\{ 0, t-2 \right\}$ for $c \in \mathbb{R}$} \\
		  $i$ & $1$ & $\nicefrac{1}{3} \left( 3+c \right)$ & $\nicefrac{1}{3} \left( 3+c \right)$ & $\nicefrac{1}{3} \left( 3+c \right)$ & $-\nicefrac{1}{12} c$ & $1$             & $1$ & $\nicefrac{5}{18} \left( 2+c \right)$   & $\nicefrac{5}{21} \left( 2 + c \right)$ \\
		  $i,j$ & $2$ & & $\nicefrac{1}{3} c$ & $\nicefrac{1}{2} \left( 4+c \right)$ & $-\nicefrac{1}{6} c$ & $\nicefrac{1}{3} c$ & $0$ & $\nicefrac{1}{18} \left( 8+c \right)$   & $\nicefrac{1}{21} \left( 8 + c \right)$ \\
		  $N$ & $3+c$ & & $c$ & $3+c$ & $\nicefrac{3}{4} c$                  &  & $c$ &                 & $\nicefrac{3}{21} \left( 3+c \right)$ \\
		  \bottomrule
		\end{tabular}
    }
  \end{center}
\end{table}

In the $n=3$ majority game, Table \ref{ta:majority} shows that, while Shapley-Taylor assigns 0 to singletons, the joint Shapley value recognises that singletons contribute positively when joining existing coalitions. The Shapley-Taylor and Shapley Interaction indices assign negative values to $N$ (for discordant interaction between members); the joint Shapley rewards $N$ for adding worth in its own right.

In the linear model with crosses, joint Shapley values' signs can again differ from those of interaction indices: let $c = -2$, which sets $\phi^J_1 = 0$: if $i=1$ joins the empty coalition or either singleton, it adds unit worth; however, when $i=1$ joins $\left\{ 2, 3 \right\}$, it subtracts unit worth.  Averaging (according to the arrival order), gives a net contribution of $0$: coalition ${1}$ does not contribute any worth in expectation.  By contrast, the Shapley-Taylor value always (for $k>1$) assigns value to singletons equal to their worth, as it is not (by design) capturing information about expected contributions of features.

\subsection{The AI/ML attribution problem}\label{ss:AI}

Following \citet{strumbelj2010efficient}, let $f$ be a prediction function, and $\bm{x} = (x_1,\ldots,x_n)\in\mathcal{A}$ an instance from the feature space. For a set of features $S\subseteq N$, define the prediction difference $v_{\bm{x}} (S)$ when only features in $S$ are known, as
\begin{equation} \label{eq:setfn}
  v_{\bm{x}} (S) \equiv \frac1{|\mathcal{A}|} \sum_{\bm{z}\in\mathcal{A}}[f(\tau(\bm{x}, \bm{z},S)) - f(\bm{z})],
\end{equation}
where $\tau(\bm{x},\bm{z},S)\in\mathcal{A}$ is defined as $\tau(\bm{x},\bm{z},S)_i$ equal to $x_i$ for $i \in S$ and $z_i$ otherwise.  Thus $v_{\bm{x}}(S)$ is the difference between the expected prediction when only the feature values of $\bm{x}$ in $S$ are known, and the expected prediction when no feature values are known. Then for each $T\subseteq N$, $\phi^J_T(v_{\bm{x}})$ is the contribution of features $T$ to the prediction $f(\bm{x})$. To estimate $\phi^J_T(v_{\bm{x}})$, let $X$ be a random variable whose law coincides with the law of the number of agents already present when $T$ arrives in the arrival interpretation (so that the law of $X$ depends on $t$, the size of $T$):
\[
  \mathbb{P}(X=i)=\binom{n-t}{i}q_i\left(\sum_{j=0}^{n-t}\binom{n-t}{j}q_j\right)^{-1}
\] 
for each $i\in\{0,\ldots,n\}$ and $\mathcal{S}$ be a random variable uniform on the set of subsets of $N\setminus T$ of size $X$. Once $X$ is sampled, we can generate the set of agents already arrived, called $S$, by choosing it uniformly from all subsets of $N \backslash T$ of size precisely $X$.  Then 
\[
  \phi^J_T(v_{\bm{x}})=\sum_{j=0}^{n-t}\binom{n-t}{j}q_j \mathbb{E}[v_{\bm{x}} (T\cup\mathcal{S})-v_{\bm{x}} (\mathcal{S})].
\] 
Thus, by the law of large numbers, we can estimate this expectation by taking an average of $ v_i(T\cup S_i)- v_i(S_i)$ where $S_i$ has the same distribution as $\mathcal{S}$ and $v_i(S)= f(\tau(\bm{x},z_i,S))-f(z_i)$, where $z_i$ is chosen uniformly from the feature space.  Refer to $\phi_T^J(v_{\bm{x}})$ as the \emph{local} joint Shapley value of features $T$ at instance $\bm{x}$. We can combine local values to obtain, for each feature, a \emph{global} joint Shapley value. We do so in two ways. The first is the standard methodology (q.v.\ \citet{lu-le-2017}), which averages the absolute values of locals.  We introduce the second for models with exclusively binary variables by considering presence/absence of features $T$ in $\bm{x}$.  This \emph{presence-adjusted} global joint Shapley value is defined as 
\begin{equation} \label{eq:pajS}
  \tilde{\phi}_T^J(f)=\frac1{|\mathcal{A}|}\sum_{\bm{x}\in\mathcal A}(2 \cdot {\bf 1}_{\bm{x}(T)=1}-1)\phi_T^J(v_x),
\end{equation}
where ${\bf 1}_{\bm{x}(T)=1}$ is one if all features $T$ are present in $\bm{x}$, and 0 otherwise. \citet{covert2020understanding} introduced SAGE (Shapley Additive Global Importance), a more sophisticated treatment of global influence measures that maintains efficiency at the global level.

All experiments are run on a single Intel(R) Core(TM) i7-6820HQ CPU.  Training details and tuning parameters are provided in the accompanying code.

\subsubsection{Simulated data}

Consider three features, $x_1, x_2$ and $x_3$, each uniformly drawn from $\left( 0, 1 \right)$ and a dataset of 50 observations.  We first consider independently drawn features; we then investigate correlation by fixing $x_2 = 1 - x_1$.  We use simplified `ML models', $f \left( \bm{x} \right)$, to obtain exact global joint Shapley values, averaging the absolute values of local joint Shapley values derived from equation \eqref{eq:setfn}.  Table \ref{ta:indep-f1} displays the results.

\begin{table}[h]
	\caption{Uniform random variables; $k=3$}
	\label{ta:indep-f1}
	\begin{center}
		\begin{small}
			\begin{tabular}{rlllllll}
				\toprule
				& $x_1$ & $x_2$ & $x_3$ & $x_1, x_2$ & $x_1, x_3$ & $x_2, x_3$ & $x_1, x_2, x_3$ \\
				\midrule
				\multicolumn{8}{c}{independent features} \\
				$f_1 \left( \bm{x} \right) = x_1$       & 0.122 & 0	& 0	& 0.049 & 0.049 & 0 & 0.037 \\
				$f_2 \left( \bm{x} \right) = x_1 + x_2$ & 0.122 & 0.114 & 0 & 0.061	& 0.049 & 0.045 & 0.046 \\
				$f_3 \left( \bm{x} \right) = x_1 - x_2$ & 0.122 & 0.114 & 0 & 0.062 & 0.049 & 0.045 & 0.047 \\
				\midrule 
				\multicolumn{8}{c}{correlated features: $x_2 = 1 - x_1$} \\
				$f_2 \left( \bm{x} \right)$ & 0.122 & 0.122 & 0 & 0 & 0.049 & 0.049	& 0 \\
                $f_3 \left( \bm{x} \right)$ & 0.122 & 0.122 & 0 & 0.098 & 0.049 & 0.049 & 0.073 \\
				\bottomrule
			\end{tabular}
		\end{small}
	\end{center}
\end{table}

\paragraph{Independent variables} As only $x_1$ influences $f_1$, only coalitions including it receive value; the more diluted its role, the lower the value; $x_2$ and $x_3$ are symmetrically irrelevant.  For $f_2$, $x_1$ and $x_2$ play equal roles, and receive equal values (up to variance due to the sample size).  For $f_2$ and $f_3$, $\left\{ x_1, x_2 \right\}$ receives similar value, and more than in $f_1$, where only $x_1$ influenced model predictions. Comparing $f_2$ and $f_3$ shows that values only differ for coalitions containing both $x_1$ and $x_2$.

\paragraph{Correlated features} $f_2$ now exhibits what we term a \emph{cancellation effect} between $x_1$ and $x_2$, assigning a value of zero to coalitions containing both.  Similarly, $f_3$ reveals an \emph{enhancement effect}: values assigned to coalitions with both $x_1$ and $x_2$ are larger than in the independent case.

Enhancement and cancellation effects do not uniquely identify underlying phenomena. To illustrate, let $x_1, x_2$ and $x_3$ be independent Bernoulli(0.5) random variables. Setting $k=3$ and letting $n \rightarrow \infty$, we obtain the exact presence-adjusted global joint Shapleys in Table \ref{ta:bernoulli}.

\begin{table}[h]
	\caption{Bernoulli(0.5) random variables; $k=3$}
	\label{ta:bernoulli}
	\begin{center}
		\begin{small}
			\begin{tabular}{rlllllll}
				\toprule
				$f \left( \bm{x} \right)$ & $x_1$ & $x_2$ & $x_3$ & $x_1, x_2$ & $x_1, x_3$ & $x_2, x_3$ & $x_1, x_2, x_3$ \\
				\midrule
				$f_1 \left( \bm{x} \right) = x_1$       & $\nicefrac{5}{21}$ & $0$ & $0$ & $\nicefrac{1}{21}$ & $\nicefrac{1}{21}$ & $0$ & $\nicefrac{1}{56}$ \\
				$f_2 \left( \bm{x} \right) = x_1 + x_2$ & $\nicefrac{5}{21}$ & $\nicefrac{5}{21}$ & $0$ & $\nicefrac{2}{21}$ & $\nicefrac{1}{21}$	& $\nicefrac{1}{21}$	& $\nicefrac{1}{28}$ \\
				$f_4 \left( \bm{x} \right) = x_1 x_2  $ & $\nicefrac{5}{42}$ & $\nicefrac{5}{42}$ & $0$ & $\nicefrac{1}{14}$ & $\nicefrac{1}{42}$	& $\nicefrac{1}{42}$ & $\nicefrac{3}{112}$  \\
				\bottomrule
			\end{tabular}
		\end{small}
	\end{center}
\end{table}	

The joint Shapley values containing $x_1$ and $x_2$ are larger for models $f_2$ and $f_4$ than for model $f_1$, a different sort of enhancement effect from that above. However, the joint Shapley value for $x_1$ is smaller in $f_4$ than it is in $f_1$ or $f_2$, again a different type of cancellation effect.

Finally, joint Shapley values can provide insight into the black-box model's structure. If, for example, $f$ can be decomposed as $f \left(\bm{x} \right) = g \left( x_1 \right) + h \left( x_2, \ldots, x_n \right)$ with independent Bernoulli(0.5) random variables, then the presence-adjusted global joint Shapley value of $x_1$ is $\nicefrac{1}{2} \left( g \left( 1 \right) - g \left( 0 \right) \right) \sum_{s=0}^{n-1} \binom{n-1}{s} q_s = \nicefrac{5}{21}$, as shown in the table.  If the joint Shapley value deviates from this, we reject the decomposition, as in the case of $f_4$.

\subsubsection{Boston housing data}

For comparability, we follow \citet{sundararajan2020shapley} by training a random forest on the Boston housing dataset \citep{harrison1978hedonic}, computing global values using the first notion discussed above. 
Table \ref{ta:jSBoston} presents the largest and smallest global joint Shapley and Shapley-Taylor values.

\begin{table}[h]
  \caption{Joint Shapley values for the Boston dataset}
  \label{ta:jSBoston}
  \begin{center}
  	\resizebox{\textwidth}{!}
  	{	
  	  	\begin{tabular}{lllll}
  	  	  \toprule
  	  	    Shapley & \multicolumn{2}{c}{$\phi^{ST} \left( f; k \right)$} & \multicolumn{2}{c}{$\phi^{J} \left( f; k \right)$} \\
  	  	        $\left( k=1 \right)$     & $k=2$               & $k=3$ & $k=2$ & $k=3$ \\
  	  	  \midrule
  	  	  RM: $2.57$      & RM: $3.12$ & RM: $3.12$ & LSTAT: $0.63$       & LSTAT: $0.42$ \\
  	  	  LSTAT: $2.47$   & LSTAT: $2.04$ & LSTAT: $2.04$ & RM: $0.56$          & RM: $0.34$ \\
  	  	  AGE: $0.81$     & DIS: $1.55$ & DIS: $1.55$ & LSTAT, RM: $0.30$   & AGE: $0.11$ \\
  	  	  DIS: $0.63$     & CRIM: $1.37$ & CRIM: $1.37$ & AGE, LSTAT: $0.21$  & LSTAT, RM: $0.11$ \\
  	  	  CRIM: $0.46$    & B: $1.31$ & DIS, LSTAT: $1.33$ & AGE, RM: $0.20$     & NOX: $0.10$\\
  	  	  NOX: $0.44$     & NOX: $1.15$ & B: $1.31$ & DIS, RM: $0.19$     & DIS: $0.08$ \\
  	  	  PTRATIO: $0.27$ & \vdots & \vdots & \vdots              & \vdots \\
  	  	  B: $0.24$       & CHAS, RM: $0.15$ & AGE, CRIM, RM: $0.21$ & CHAS, TAX: $0.02$   & RAD, TAX, ZN: $0.00$\\
  	  	  TAX: $0.22$     & LSTAT, TAX: $0.15$ & AGE, DIS, PTRATIO: $0.21$ & RAD, ZN: $0.01$     & CHAS: $0.00$ \\
  	  	  INDUS: $0.19$   & INDUS, RAD: $0.15$ & DIS, LSTAT, PTRATIO: $0.21$ & CHAS, RAD: $0.01$   & CHAS, RAD, TAX: $0.00$ \\
  	  	  RAD: $0.13$     & NOX, TAX: $0.14$ & AGE, LSTAT, PTRATIO: $0.20$ & ZN: $0.01$          & CHAS, RAD, ZN: $0.00$ \\
  	  	  ZN: $0.07$      & DIS, INDUS: $0.13$ & AGE, NOX, RM: $0.20$ & CHAS: $0.01$        & CHAS, TAX, ZN: $0.00$ \\
  	  	  CHAS: $0.07$    & DIS, PTRATIO: $0.12$ & B, CRIM, LSTAT: $0.19$ & CHAS, ZN: $0.01$    & CHAS, ZN: $0.00$ \\
  	  	  \bottomrule
  	  	\end{tabular}
    } 
  \end{center}
\end{table}

For $k=1$, the joint Shapley values are the classical Shapley values. The unimportance of CHAS seems to reflect its low variance (only 35 of 506 units back onto the Charles).

For $k=2$, LSTAT and RM still have the largest joint Shapley values, with \{LSTAT, RM\} having third largest.  The top pairs all involve LSTAT or RM, indicating that these variables also contribute to explaining house prices jointly with other variables. By contrast, \{LSTAT, RM\} is the 16th largest Shapley-Taylor interaction value.  The $k=2$ singleton joint Shapley values are fairly evenly distributed throughout the pairs; by contrast, the singleton $k=2$ Shapley-Taylor values outrank all the pairs.  This is consistent with our extension of axiom \descref{NU} to not favour singletons.  The largest joint Shapley value involving NOX is \{NOX, LSTAT\}, which is about five times as large as \{NOX, DIS\} and \{NOX, RAD\}. This is consistent with Harrison and Rubinfeld’s observation that NOX offset RAD and DIS, but reinforced LSTAT: knowing the values of NOX and LSTAT adds a lot of predictive power; knowing the values of NOX and RAD or DIS tends to wash out additional predictive power. 

For $k=3$, singletons again dominate: the largest triple is \{AGE, LSTAT, RM\}, the three individually most important features. Continuing the above analysis, the joint Shapley value of \{DIS, RAD\} is about 0.015, while that of \{DIS, NOX, RAD\} is about 0.008, and that of \{DIS, LSTAT, RAD\} is about 0.022. We understand this to mean \{DIS, NOX, RAD\} contains the same sort of information as \{DIS, RAD\}, but with NOX offsetting the other variables, while LSTAT brings novel socio-economic information to the distance variables DIS and RAD, which it continues to partially offset.  Similarly, the least important joint features include the individually unimportant CHAS and its variants –-- echoing the role played by RM and LSTAT at the top of the list.  While \{DIS, LSTAT\} has a Shapley-Taylor value of 1.33 (indicating a large interaction between the pair but not its overall importance), its joint Shapley is 0.06 (ranked 18th) --- directly assigning a value of importance. 

Table~\ref{ta:jSBoston}'s values are exact.  Figure \ref{ta:jS-Bos-convergence} demonstrates the estimation procedure above, showing convergence of the sampled values as the iterations increase.  As with other sampling procedures (e.g.\ \citet{st-ko-2014}), this cuts the complexity to order $n$ to a linear power of $k$; as $k \rightarrow n$, the complexity approaches $2^n$ times a polynomial in $n$ (compared to $3^n$ for the exact $\phi^J$).

\begin{figure}[ht]
	\begin{center}
		\begin{minipage}{0.45\textwidth}
			\centering
              \resizebox{\textwidth}{!}{\input{210528-boston-iterations-no-font-info.pgf}}
		      \caption{Sampled $\phi^J$ converges to exact $\phi^J$ with $L^2$ norm $\approx 1.45$ ; $k=2$ Boston}
		      \label{ta:jS-Bos-convergence}
	    \end{minipage}\hfill
		\begin{minipage}{0.45\textwidth}
      	  \centering
              \resizebox{\textwidth}{!}{\input{L2-norm-review-38-multiple-iterations-no-font-info.pgf}}
          \caption{Difference between consecutive $\phi^J$ samples averages converges to zero; $k=2$ movie review \#2} 
          \label{ta:jS-RT-convergence}
        \end{minipage}
	\end{center}
\end{figure}

\subsubsection{Movie reviews}

We train a fully connected neural network (two hidden layers, 16 units per layer, ReLU activations) on the binary movie review classifications in \citet{Pang+Lee:05a}.  From the full set of reviews, we remove a test block of 100 (picked to include those in Table 1 of \citet{sundararajan2020shapley}) for analysis.  We encode reviews as the 1000 most common words in the corpus, augmented by key words in Table 1 of \citet{sundararajan2020shapley} to aid comparison of measures, for a total of 1004.  (The DSA features are drawn from positive reviews: \{won't\}, \{disappointed\}, \{both\}, \{inspiring\}, \{a\}, \{crisp\}, \{excellent\}, \{youthful\}, \{John\}, \{terrific\}.)  Binary accuracy is typically c.\ 76\% after four epochs.

\begin{table}[]
  \caption{Examples of local joint Shapley values in the \citet{Pang+Lee:05a} movie reviews}
  \label{ta:eg-rt}
  \begin{center}
  \resizebox{\textwidth}{!}
  	{		
	\begin{tabular}{@{}ll}
		\toprule
		Review & joint Shapleys                                                                                                                                                                                                                                       \\ \midrule
		1: \emph{negation}: aficionados of the whodunit won’t be disappointed        & \begin{tabular}[c]{@{}l@{}}\{disappointed\}: $-2\times 10^{-5}$\\ \{won't\}: $6\times 10^{-5}$\\ \{be, disappointed\}: $-9\times 10^{-8}$\\ \{won't, disappointed\}: $6\times 10^{-8}$\\ \{won't, be, disappointed\}: $5\times 10^{-9}$\end{tabular} \\ \midrule
		2: \emph{enhancement}: both inspiring and pure joy                              & \begin{tabular}[c]{@{}l@{}}\{both\}: $2\times 10^{-4}$\\ \{and\}: $6\times 10^{-5}$\\ \{and, both\}: $1\times 10^{-6}$\end{tabular}                                                                                                                  \\ \midrule
		3: \emph{context}: you wish Jacquot had left well enough alone              & \begin{tabular}[c]{@{}l@{}}\{you, well\}: $9\times 10^{-7}$\\ \{left, well\}: $-3\times 10^{-7}$\end{tabular}                                                                                                                                        \\ \midrule
		4: \emph{lost potential}: fascinating little thriller that would have been perfect & \begin{tabular}[c]{@{}l@{}}\{would\}: $-1\times 10^{-4}$\\ \{fascinating\}: $2\times 10^{-4}$\\ \{would, fascinating\}: $3\times 10^{-7}$\\ \{would, been, fascinating\}: $-1\times 10^{-8}$\end{tabular} \\ \midrule
		5: \emph{adjectival}: director \ldots award-winning \ldots make a terrific effort     & \begin{tabular}[c]{@{}l@{}}\{effort\}: $-1\times 10^{-5}$\\ \{director\}: $-9\times 10^{-6}$\\ \{terrific, effort\}: $8\times 10^{-7}$\\ \{winning, director\}: $5\times 10^{-7}$\end{tabular}                                                       \\ \bottomrule
	\end{tabular}
    }
  \end{center}
\end{table}

Table \ref{ta:eg-rt}'s local joint Shapley values are tiny as each of the $2^{1004} - 1$ joint features has a tiny effect on the probability of a positive review.  
In the vein of \citeauthor{sundararajan2020shapley}, we identify intuitive effects:
\begin{enumerate}
	\item \emph{negation}: the negative local joint Shapley values for \{disappointed\} and \{be, disappointed\} become positive when negated by adding \{won’t\} to the coalition.
	\item \emph{enhancement}: the positive local joint Shapley values for \{both\} and \{and\} are enhanced by the positive \{and, both\}.
	\item \emph{context}: the sign of local joint Shapley values involving \{well\} depend on the context; while \{you, well\} is positive, \{left, well\} is negative.
	\item \emph{lost potential}: the positive local joint Shapley values involving \{fascinating\} become negative when \{been\} is added.
	\item \emph{adjectival}: while \{director\} and \{effort\} are individually negative, they become positive when qualified with the adjectives \{winning\} and \{terrific\}, respectively.
\end{enumerate}

For global values, as the features are binary, we compute the presence-adjusted global joint Shapleys, $\tilde{\phi}^J$.  Comparing the DSA features' global $\tilde{\phi}^J$ across the test reviews for $k=1$ and $k=2$ shows \{terrific\} to be the largest, followed by \{both\} then \{excellent\}. This reflects prevalence in the training set, where \{terrific\} appears four times as often in positive reviews.  Similarly, on the negative $\tilde{\phi}^J$ side, \{John\} is followed by \{disappointed\}: \{John\} appears thrice as often in negative reviews.

The largest pairs are \{both, inspiring\} followed by \{excellent, youthful\}.  All of the pairs, except \{John, terrific\}, have a positive sign: these pairs typically do not appear outside the positive reviews from which they were drawn.  As our encoding discards sequential information, the pairs are smaller than the singletons: a coalition merely indicates co-occurrence in a review, rather than a bigram.

As the 10 DSA features are a very small subset of all 1004, we cannot asses whether \descref{JEF} holds.  However, as a health check note: the average presence-adjusted local joint Shapleys over the positive reviews is about 20\% larger than that over the negative reviews.  They are both negative: as these features are drawn from positive reviews, but largely absent in any given review, their effect is negative.  Further, Table \ref{ta:local-adjust} indicates that the ranking of single features is largely preserved as $k$ increases from $1$ to $2$: the exception is \{a, crisp\}: in the corpus, \{crisp\} is typically preceded by \{a\}.

Figure \ref{ta:jS-RT-convergence} shows the estimated values' convergence.

\begin{table}[h]
	\caption{Largest presence-adjusted global joint Shapley values on DSA features}
	\label{ta:local-adjust}
	\begin{center}
	  \resizebox{\textwidth}{!}
	    {			
				\begin{tabular}{lllllllllll}
					\toprule
					$k = 1$ & terrific & both & excellent & won't & crisp & youthful & a & inspiring & disappointed & John \\
					$k = 2$ & terrific & both & excellent & won't & a & youthful & inspiring & crisp & disappointed & John \\
					\bottomrule
				\end{tabular}
		}
	\end{center}
	\vskip -0.1in
\end{table}

\section{Conclusions} \label{se:conc}

The joint Shapley value directly extends \citeauthor{sha-53}'s value to measure the effect of a set of features on a model's predictions.  Further work is needed to maintain properties like global efficiency \citep{covert2020understanding} and to make sampling more efficient \citep{williamson2020efficient, mitchell2021sampling}.  We believe that understanding complex models is labor intensive.  Nevertheless, we envisage a common workflow that computes $k=1$ Shapley values, followed by $k=2$ to identify strong pairwise effects; analyses for $k \ge 3$ then respond to the analyst's evolving questions about the model's functioning.

\section{Ethics statement}

This paper contributes to the literature on explainable AI, an important component of the Fairness, Accountability and Transparency research agenda for ethical AI.  It does not use human subjects; it only draws on publicly available datasets; the work has not been sponsored, and does not seek to promote any third organisations; none of the authors face any conflict of interest.

\section{Reproducibility statement}

Our proofs and source code are available in the accompanying supplemental material; all data are taken from the public domain.

\bibliography{joint-Shapley}
\bibliographystyle{iclr2022_conference}

\appendix 
\makebox(0,0){\phantom{\descref{JSY}, \descref{JAN}, \descref{JEF}, \descref{JNU}, \descref{JLI}}}%

In this supplementary material we discuss the arrival order interpretation, present the proofs of all results, and discuss a notion of joint symmetry obtained by removing conditions 2 and 3 from \descref{JSY}.

\section{Arrival order interpretation}\label{S:arrival}
As discussed in the paper, the joint Shapley value can be viewed in terms of the worth brought by `arriving' agents but, rather than arriving one at time, they can now also arrive in coalitions. To be precise, consider this procedure: at time 0, no agents have arrived; at each $t \in \{1,2,\ldots\}$, the next set of agents to arrive is chosen uniformly from the set of non-empty subsets of size at most $k$ of the remaining (yet to arrive) agents.
Then $\phi^J_T$ is the expected worth brought by coalition $T$ when it arrives (a coalition is assigned zero worth if it does not arrive at any time).  To see this, denote by $A_i$ the coalition to arrive at time $i$, by $B_i$ the union of all coalitions that have arrived up to time $i$: $B_i=\bigcup_{j\le i}A_j$, and by $p_T$ the probability that at some time coalition $T$ arrives, $p_T=\mathbb{P}(\exists\,i:\,B_i=T)$. We have the recursive relationship:
\begin{align*}
	p_T&=\sum_{i=1}^n\sum_{\substack{S\subsetneq T:\\|S|\ge |T|-k}}\mathbb{P}(B_{i-1}=S)\mathbb{P}(A_i=T\setminus S\mid B_{i-1}=S)\\
	&=\sum_{i=1}^n\sum_{\substack{S\subsetneq T:\\|S|\ge |T|-k}}\mathbb{P}(B_{i-1}=S)\left(\sum_{r=1}^{(n-|S|)\wedge k}\binom{n-|S|}{r}\right)^{-1}
	=\sum_{s=(|T|-k)\vee 0}^{|T|-1}\binom{t}{s}p_{S}\left(\sum_{r=1}^{(n-s)\wedge k}\binom{n-s}{r}\right)^{-1},
\end{align*}
where in the last line $S$ is any set of size $s$. Thus we see that $p_T$ only depends on $T$ through its cardinality, and defining 
\[\hat p_t:=p_T\left(\sum_{r=1}^{(n-t)\wedge k}\binom{n-t}{r}\right)^{-1}\] for any $T$ with $|T|=t$, the expected worth brought by coalition $T$ under this procedure is
\[
	\sum_{i=1}^n\sum_{S\subseteq N\setminus T}\mathbb{P}(B_{i-1}=S,\,A_i=T)[v(S\cup T)-v(S)]
	= \sum_{S\subseteq N\setminus T}\hat p_{|S|}[v(S\cup T)-v(S)].
\]
Further, we have the relationship
\[
\hat p_t=\frac{\sum_{s=(t-k)\vee 0}^{t-1}\binom{t}{s}p_s}{\sum_{r=1}^{(n-t)\wedge k}\binom{n-t}{r}}
\] and since we know that $p_0=1$ (we start with no agents having arrived) and $p_n=1$ (we finish with all agents having arrived) we also have the identities:
\begin{align*}
	1=\sum_{s=n-k}^{n-1}\binom{n}{s}\hat p_s,\quad
	1=\hat p_0\sum_{r=1}^k\binom{n}{s}.
\end{align*}
By comparing with the identities defining $(q_0,\ldots,q_{n-1})$, we deduce that $\hat p_t=q_t$ for all $t\in\{0,\ldots,n-1\}$, which verifies this arrival interpretation.

\section{Proofs}
\begin{proof}[Proof of Lemma \ref{L:aterms}]
	For the first statement, for each $\varnothing\neq S\subseteq N\setminus T$ consider the game
	\[
	v_S(R)=\begin{cases}
		1&\mbox{if }R=S\mbox{ or }R=S\cup T,\\
		0&\mbox{otherwise}.
	\end{cases}
	\]Then for such $S$, by \descref{JNU}, $\phi_T(v_S)=0$.  By this equality,  Lemma~\ref{L:decomp}, and the definition of $v_S$,
	\[
	0=\phi_T(v_S)=\sum_{R\subseteq N}a^T_Rv_S(R)=a_S^T+a_{S\cup T}^T.
	\]
	For the second statement, for each $\varnothing\neq H\subsetneq T$ let $\alpha_H$ be a constant and for $
	S\subseteq N\setminus T$, consider the game
	\[
	x_S^{\bm{\alpha}}(R)=\begin{cases}
		\alpha_H&\mbox{if }R=S\cup H\mbox{ for some }\varnothing\neq H\subsetneq T,\\
		0&\mbox{otherwise}.
	\end{cases}
	\]
	By \descref{JNU}, $\phi_T(x_S^{\bm{\alpha}})=0$ for every $S\subseteq N\setminus T$. Thus by Lemma~\ref{L:decomp}
	\[
	0 = \phi_T(x_S^{\bm{\alpha}}) = \sum_{R\subseteq N}a_R^Tx_S^{\bm{\alpha}} = \sum_{\varnothing\neq H\subsetneq T}a_{S\cup H}^Tx_S^{\bm{\alpha}}(S\cup H)
	= \sum_{\varnothing\neq H\subsetneq T}a_{S\cup H}^T\alpha_H.
	\]
	Since this holds for every choice of constants $\alpha_H$, it follows that $a_{S\cup H}^T=0$ for all $S\subseteq N\setminus T$ and $\varnothing\neq H\subsetneq T$, as required.
\end{proof}

\begin{proof}[Proof of Proposition~\ref{P:probdecomp}]
	By Lemma~\ref{L:decomp} there exist constants $\{a_S^T\}_{S\subseteq N,\,\varnothing\neq T\subseteq N}$ such that for every $v$ and $\varnothing\neq T\subseteq N$, 
	\begin{align*}
		\phi_T(v) & = \sum_{S\subseteq N}a_S^Tv(S)
		= \sum_{S\subseteq N\setminus T} \left( a_S^Tv(S) +\sum_{\varnothing\neq H\subsetneq T} a_{S\cup H}^T v(S\cup H)
		+ a_{S\cup T}^T v(S\cup T) \right) \\
		& = \sum_{S\subseteq N\setminus T}(a_S^Tv(S)+a_{S\cup T}^T v(S\cup T)) = \sum_{ S\subseteq N\setminus T}a_{S\cup T}^T[ v(S\cup T)-v(S)];
	\end{align*}
	where the last two equalities owe to Lemma \ref{L:aterms}. The proof is complete by setting $p^T(S)=a^T_{S\cup T}$.
\end{proof} 

\begin{proof}[Proof of Proposition \ref{P:jef}]
	Suppose $\phi$ satisfies axioms \descref{JLI}, \descref{JNU} and \descref{JEF}. Then by Proposition~\ref{P:probdecomp} the constants $\{p^T(S)\}$ exist, such that for every $v$ and $\varnothing\neq T\subseteq N$,
	\[
	\phi_T(v)=\sum_{S\subseteq N\setminus T}p^T(S)[v(S\cup T)-v(S)].
	\]
	Now for each $\varnothing\neq R\subseteq N$ consider the identity game
	\[
	w_R(S)=\begin{cases}1&\mbox{if }S=R,\\
		0&\mbox{otherwise.}
	\end{cases}
	\]
	Then for every $\varnothing\neq T\subseteq N$ with $|T|\le k$, 
	\[
	\phi_T(w_R)=\sum_{S\subseteq N\setminus T}p^T(S)[w_R(S\cup T)-w_R(S)].
	\]
	Note that the term $w_R(S\cup T)-w_R(S)$ in the above sum is equal to 1 only when $S\subsetneq R$ and $T=R\setminus S$, i.e.\ only when $S=R\setminus T$ and $\varnothing\neq T\subseteq R$. Further note that this term is equal to $-1$ only when $S=R$ and $T\neq\varnothing$, i.e.\ when $S=R$ and $\varnothing\neq T\subseteq N\setminus R$ (as must have $S\cap T=\varnothing$). In all other cases, this term is 0. Hence we deduce from \descref{JEF} that
	\[
	\delta_N(R)= w_R(N)=\sum_{\substack{\varnothing\neq T\subseteq R:\\|T|\le k}}p^T(R\setminus T)-\sum_{\substack{\varnothing\neq T\subseteq N\setminus R:\\|T|\le k}}p^T(R).
	\]
	Now we show the implication in the other direction. If $\phi_T(v)=\sum_{S\subseteq N\setminus T}p^T(S)[v(S\cup T)-v(S)]$ then it is immediate that \descref{JLI} and \descref{JNU} are satisfied. 	For \descref{JEF}, we wish to show that for every $v$,
	\[
  	  \sum_{\substack{\varnothing\neq T\subseteq N:\\|T|\le k}}\sum_{S\subseteq N\setminus T}p^T(S)[v(S\cup T)-v(S)]=v(N).
	\]
	Note that for each $\varnothing\neq R\subseteq N$, the coefficient of $v(R)$ on the left-hand side in the above equation is
	\[
	\sum_{\substack{\varnothing\neq T\subseteq R:\\|T|\le k}}p^T(R\setminus T)-\sum_{\substack{\varnothing\neq T\subseteq N\setminus R:\\|T|\le k}}p^T(R).
	\]
	But by equation \eqref{e:probeqn}, this is equal to $\delta_N(R)$.
\end{proof}

\begin{proof}[Proof of Proposition~\ref{prop:janchar}]
	In light of Proposition~\ref{P:jef} we just have to consider \descref{JAN}. 
	
	\textbf{Only if:} Suppose $\phi$ satisfies \descref{JLI}, \descref{JNU}, \descref{JEF} and \descref{JAN}.	First, we shall establish that
	\begin{equation}
		p^T\left( S \right) = p^{T} \left( S' \right)\,\forall\,\varnothing \neq T \subseteq N,\,S, S' \subseteq N \setminus T \text{ s.t. } s = s'. \label{eq:firststep}
	\end{equation}
	Fix such a $T$, $S$ and $S'$. Consider again the identity game, $w_S$ and let $\sigma$ be a self-inverse permutation such that $S \mapsto S'$, $S' \mapsto S$,  and $\sigma \left( \left\{ i \right\} \right) = \left\{ i \right\}$ for all $i \not \in S \cup S'$.  As $T \subseteq N \setminus \left( S \cup S' \right)$ and $\sigma \left( T \right) = T$, we have by \descref{JAN}
	\[
	\phi_T \left( w_S \right) = \phi_{\sigma^{-1} \left( T \right)} \left( w_S \right) =  \phi_T \left( \sigma w_S \right)
	\]
	where
	\begin{align*}
		\sigma w_S \left( R \right) = w_S \left( \sigma^{-1} \left( R \right) \right) = 
		\left\{
		\begin{array}{ll}
			1 & \mbox{if } R= \sigma \left( S \right) = S'\\
			0 & \mbox{otherwise}
		\end{array}
		\right\}
		= w_{S'} \left( R \right).
	\end{align*}
	Hence we obtain $\phi_T(w_S)=\phi_T \left( \sigma w_S \right) = \phi_T \left( w_{S'} \right)$.  Next, from Proposition~\ref{P:probdecomp} we have
	\[
	\phi_T \left( w_S \right) = \sum_{Q \subseteq N \setminus T} p^T \left( Q \right) \left[ w_S \left( Q \cup T \right) - w_S \left( Q \right) \right] = - p^T \left( S \right),
	\]
	and similarly  $\phi_T \left( w_{S'} \right) = - p^T \left( S' \right)$. Hence we obtain $p^T \left( S \right) = p^T \left( S' \right)$, showing \eqref{eq:firststep}. 
	
	Using induction on $s$, we now establish that \eqref{eq:star} holds. Fix $T$ and $T'$ of the same size.  For the base case, suppose $s = s' = n - t$.  For $S\subseteq N\setminus T$ and $S'\subseteq N\setminus T'$, this forces  $S = N \setminus T$ and $S' = N \setminus T'$.  Now consider the game
	\[
	x_n \left( R \right) = 
	\begin{cases}
		1 & \mbox{if } r = n\\
		0 & \mbox{otherwise,}
	\end{cases}
	\]
	so that $x_n \left( R \right) = 1$ if and only if $R = N$.  Define a self-inverse permutation $\sigma$ so that $\sigma \left( T \right) = T'$, $\sigma \left( T' \right) = T$ and $\sigma \left( \left\{ i \right\} \right) = \left\{i\right\}$ for all $i \not \in \left( T \cup T' \right)$.  Then by \descref{JAN} and as $\sigma x_n=x_n$,
	\[
	\phi_T(x_n)=\phi_{\sigma^{-1}(T')}(x_n)=\phi_{T'}(\sigma x_n)=\phi_{T'}(x_n).
	\]
	Next, from Proposition~\ref{P:probdecomp},
	\[
	\phi_T \left( x_n \right) = \sum_{Q \subseteq N \setminus T} p^T \left( Q \right) \left[ x_n \left( T \cup Q \right) - x_n \left( Q \right) \right] = p^T \left( N \setminus T \right),
	\]
	and similarly $\phi_{T'} \left( w_n \right) = p^{T'} \left( N \setminus T' \right)$.  Hence we obtain $p^T \left( N \setminus T \right) = p^{T'} \left( N \setminus T' \right)$ which establishes the base case.
	
	We now suppose that $p^T \left( S \right) = p^{T'} \left( S' \right)$ for all $s = s' \ge n-c$ where $S \subseteq N \setminus T$, $S' \subseteq N \setminus T'$ and $c$ is a positive integer.
	We shall show that $p^T \left( S \right) = p^{T'} \left( S' \right)$ for all $s = s' \ge n-c-1$ where where $S \subseteq N \setminus T$ and $S' \subseteq N \setminus T'$.  To this end, consider the game
	\[
	x \left( R \right) = 
	\begin{cases}
		1 & \mbox{if } r \ge n-c-1+t,\\
		0 & \mbox{otherwise}
	\end{cases}.
	\]
	Thus, as before, we may write
	\[
	\phi_T \left( x \right) = \sum_{Q \subseteq N \setminus T} p^T \left( Q \right) \left[ x \left( Q \cup T \right) - x \left( Q \right) \right] = \sum_{\substack{Q \subseteq N \setminus T \\ n-c-1 \le q < n-c-1+t}} p^T \left( Q \right).
	\]
	Similarly,
	\[
	\phi_{T'} \left( x \right) = \sum_{\substack{Q' \subseteq N \setminus T' \\ n-c-1 \le q' < n-c-1+t}} p^T \left( Q \right).
	\]
	
	Again, by \descref{JAN}, we prove that $\phi_T \left( x \right) = \phi_{T'} \left( x \right)$. Define a self-inverse permutation $\sigma$ so that $\sigma \left( T \right) = T'$, $\sigma \left( T' \right) = T$ and $\sigma \left( \left\{ i \right\} \right) = \left\{i\right\}$ for all $i \not \in \left( T \cup T' \right)$.  As worth  in game $x$ depends only on a coalition's cardinality, we have $\sigma x = x$. Thus, by \descref{JAN}, $\phi_T(x)=\phi_{\sigma(T)}(\sigma x)=\phi_{\sigma(T)}(x)=\phi_{T'}(x)$.
	
	However,
	\[
	\phi_T \left( x \right) = \sum_{\substack{Q \subseteq N \setminus T \\ q = n-c-1}} p^T \left(Q \right) + \sum_{\substack{Q \subseteq N \setminus T \\ n-c-1 < q < n-c-1+t}} p^T \left(Q \right)
	= \sum_{\substack{Q \subseteq N \setminus T \\ q = n-c-1}} p^T \left( Q \right) + \sum_{\substack{Q' \subseteq N \setminus T' \\ n-c-1 < q' < n-c-1+t}} p^{T'} \left( Q' \right);
	\]
	and
	\[
	\phi_{T'} \left( x \right) = \sum_{\substack{Q' \subseteq N \setminus T' \\ q' = n-c-1}} p^{T'} \left( Q' \right) + \sum_{\substack{Q' \subseteq N \setminus T' \\ n-c-1 < q' < n-c-1+t}} p^{T'} \left( Q' \right)
	\]
	which gives, by the inductive hypothesis and $\phi_{T}(x)=\phi_{T'}(x)$,
	\[
	\sum_{\substack{Q \subseteq N \setminus T \\ q = n-c-1}} p^T \left( Q \right) = \sum_{\substack{Q' \subseteq N \setminus T' \\ q' = n-c-1}} p^{T'} \left( Q' \right).
	\]
	
	But by \eqref{eq:firststep}, $p^T \left( Q \right) = p^{T} \left( Q' \right)$ if $q = q'$.  Thus the above equation becomes
	\[
	\binom{n-t}{n-c-1} p^T \left( Q \right) = \binom{n-t}{n-c-1} p^{T'} \left( Q' \right)
	\]
	for any $Q \subseteq N \setminus T$ and $Q' \subseteq N \setminus T'$ with $q = q' = n-c-1$.  Thus, $p^T \left( Q \right) = p^{T'} \left( Q' \right)$, completing the inductive step, and the `only if' statement.
	
	\textbf{If:} 	Suppose \eqref{eq:star} is satisfied. Fix a permutation $\sigma$ on $N$ and game $v\in\mathcal{G}^N$.  Then for any $\varnothing\neq T\subseteq N$,
	\begin{align*}
		\phi_T \left( \sigma v \right) & = \sum_{S \subseteq N \backslash T} p^T \left( S \right) \left[ \sigma v \left( S \cup T \right) - \sigma v \left( S \right) \right]
		=  \sum_{S \subseteq N \backslash T} p^T \left( S \right) \left[ v \left( \sigma^{-1} \left( S \cup T \right) \right) - v \left( \sigma^{-1} \left( S \right) \right) \right] \\
		& = \sum_{S \subseteq N \backslash T} p^T \left( S \right) \left[ v \left( \sigma^{-1} \left( S \right) \cup \sigma^{-1} \left( T \right) \right) - v \left( \sigma^{-1} \left( S \right) \right) \right].
	\end{align*}
	Defining the set  $S' = \sigma^{-1} \left( S \right)$ allows us to rewrite the above as 
	\begin{align*}
		\phi_T \left(\sigma v \right) & = \cdots = \sum_{S' \subseteq N \backslash \sigma^{-1} \left( T \right)} p^T \left( \sigma \left( S' \right) \right) \left[ v \left( S' \cup \sigma^{-1} \left( T \right) \right) - v \left( S' \right) \right] \\
		& = \sum_{S' \subseteq N \backslash \sigma^{-1} \left( T \right)} p^{\sigma^{-1} \left( T \right)} \left( S' \right) \left[ v \left( S' \cup \sigma^{-1} \left( T \right) \right) - v \left( S' \right) \right]
		=\phi_{\sigma^{-1}(T)}(v),
	\end{align*}
	with the penultimate step due to condition \eqref{eq:star}.	
\end{proof}
\begin{proof}[Proof of Proposition~\ref{prop:jsychar}]
	In light of Proposition~\ref{P:jef} we just have to consider \descref{JSY}. 
	
	\textbf{Only if:} Suppose $\phi$ satisfies \descref{JLI}, \descref{JNU}, \descref{JEF} and \descref{JSY}, fix $\varnothing\neq T,T'\subseteq N$, and consider again the identity game $w_R$. Then for any $\varnothing\neq R\subseteq N\setminus(T\cup T')$,
	\begin{itemize}
		\item $w_R(S\cup T)=0=w_R(S\cup T')$ for all $S\subseteq N\setminus(T\cup T')$,
		\item $w_R(S\cup T)=0=w_R(S)$ for all $S\subseteq N\setminus T$ such that $S\cap T'\neq \varnothing$,
		\item $w_R(S\cup T')=0=w_R(S)$ for all $S\subseteq N\setminus T'$ such that $S\cap T\neq \varnothing$.
	\end{itemize}
	Hence by \descref{JSY} $\phi_T(w_R)=\phi_{T'}(w_R)$. But $\phi_T(w_R)=p^T(R)$ and $\phi_{T'}(w_R)=p^T(R)$.
	This shows that $p^T(R)=p^{T'}(R)$ for all $\varnothing\neq R\subseteq N\setminus(T\cup T')$. To show that $p^T(\varnothing)=p^{T'}(\varnothing)$ we consider the game
	\[
	w^*(S)=\begin{cases}1&\mbox{if }S\neq \varnothing,\\
		0&\mbox{otherwise.}
	\end{cases}
	\]
	Then 
	\begin{itemize}
		\item $w^*(S\cup T)=1=w^*(S\cup T')$ for all $S\subseteq N\setminus(T\cup T')$, 
		\item $w^*(S\cup T)=1=w^*(S)$ for all $S\subseteq N\setminus T$ such that $S\cap T'\neq \varnothing$ (since then $S\neq\varnothing$),
		\item $w^*(S\cup T')=1=w^*(S)$ for all $S\subseteq N\setminus T'$ such that $S\cap T\neq \varnothing$ (since then $S\neq\varnothing$).
	\end{itemize}
	It thus follows by \descref{JSY} that $\phi_T(w^*)=\phi_{T'}(w^*)$. However, $\phi_T(w^*)=p^T(\varnothing)$ and $\phi_{T'}(w^*)=p^{T'}(\varnothing)$, which gives the required identity and shows that \eqref{eq:star2} holds.
	
	\textbf{If:} Now we show the implication in the other direction. Suppose \eqref{eq:star2} holds and $v\in\mathcal{G}^N$ satisfies the three conditions in \descref{JSY}. Then
	\begin{align*}
		\phi_T(v) & = \sum_{S\subseteq N\setminus T}p^T(S)[v(S\cup T)-v(S)]
		=\sum_{S\subseteq N\setminus(T\cup T')}p^T(S)[v(S\cup T)-v(S)] \\
		& = \sum_{S\subseteq N\setminus(T\cup T')}p^{T'}(S)[v(S\cup T')-v(S)]
		= \sum_{S\subseteq N\setminus T'}p^{T'}(S)[v(S\cup T')-v(S)] = \phi_{T'}(v).
	\end{align*}
	Hence \descref{JSY} is satisfied.
\end{proof}

\begin{proof}[Proof of Theorem~\ref{T:main}]
	We have to show that there exists exactly one choice of constants $\{p^T(S)\}$ which satisfy equations \eqref{e:probeqn}--\eqref{eq:star2}. Notice that satisfying \eqref{eq:star} and \eqref{eq:star2} is equivalent to satisfying
	\[
	p^T(S) = p^{T'}(S)\,\forall\, S\subseteq N\setminus T,\,S'\subseteq N\setminus T'\, \text{ s.t. }\, s = s'.
	\]
	Thus $p^T(S)$ does not depend on $T$ at all, and only depends on the cardinality of $S$. Let $q_s$ denote $p^T(S)$ for any $S\subseteq N\setminus T$. Then we can re-write equation~\eqref{e:probeqn} in terms of $q_s$ as
	\begin{align}
		1   &= \sum_{i=n-k}^{n-1}\binom{n}{i}q_i,\label{eq:pf-thm1-1}\\
		q_s &= \frac{\sum_{i=(s-k)\vee0}^{s-1}\binom{s}{i}q_i}{\sum_{i=1}^{k\wedge(n-s)}\binom{n-s}{i}}\quad\forall\, s\in\{1,\ldots,n-1\}.\label{eq:pf-thm1-2}
	\end{align}
	Note that for any $q_0$, equation~\eqref{eq:pf-thm1-2} fully determines all other $q_i$, for $i\in\{1,\ldots,n-1\}$ and $q_0$ is then determined by \eqref{eq:pf-thm1-1}. Thus there is at most one solution. However, we have already identified (see the arrival-order discussion in Appendix~\ref{S:arrival}) that a solution to this recurrence is given by $(q_0,\ldots,q_{n-1})=(\hat p_0,\ldots,\hat p_{n-1})$, for which $\hat p_0=\left(\sum_{i=1}^k\binom{n}{i}\right)^{-1}$. \qedhere
\end{proof}

\section{Strong joint symmetry}
\makebox(0,0){\phantom{\descref{JSY}, \descref{JAN}, \descref{JEF}, \descref{JNU}, \descref{JLI}}}%
We examine the effect of removing conditions 2 and 3 from \descref{JSY}. As it turns out, this leads to the non-existence of an index.
To be precise, we consider replacing axioms \descref{JAN} and \descref{JSY} with:
\begin{description}
  \descitem{SJS} \emph{strong joint symmetry} \label{ax:sjs}: fix $\varnothing \neq T, T' \subseteq N$. Then
  \begin{align*}
	v \left( S \cup T \right) = v \left( S \cup T' \right) \forall S \subseteq N \backslash \left( T \cup T' \right) \\\Rightarrow \phi_T \left( v \right) = \phi_{T'} \left( v \right).
  \end{align*}
\end{description}

\begin{prop} \label{th:nesjs}
  There is no index $\phi$ satisfying axioms \descref{JLI}, \descref{JNU}, \descref{JEF}, and \descref{SJS} that is guaranteed to exist for all games.
\end{prop}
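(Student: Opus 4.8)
The plan is to exhibit a minimal instance in which the four axioms are jointly unsatisfiable: take $N=\{1,2\}$ and order of explanation $k=2$. Since $k=n$ here, every nonempty subset of $N$ has cardinality at most $k$, so no boundary issues arise, and it suffices to show that no index on $\mathcal{G}^{\{1,2\}}$ can satisfy \descref{JLI}, \descref{JNU}, \descref{JEF} and \descref{SJS} simultaneously. Throughout, a game $v$ on $\{1,2\}$ is recorded by the tuple $\bigl(v(\varnothing),v(\{1\}),v(\{2\}),v(\{1,2\})\bigr)$.

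First I would apply Propositions~\ref{P:probdecomp} and \ref{P:jef}: any $\phi$ obeying \descref{JLI}, \descref{JNU} and \descref{JEF} has the form $\phi_T(v)=\sum_{S\subseteq N\setminus T}p^T(S)\,[v(S\cup T)-v(S)]$ for constants $\{p^T(S)\}_{S\subseteq N\setminus T,\ \varnothing\neq T\subseteq N}$ subject to \eqref{e:probeqn}. Rather than manipulate \eqref{e:probeqn} abstractly, I would read off the constraints I need by evaluating $\phi$ at a short list of explicit games on $\{1,2\}$.

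The key step is to apply \descref{SJS} across cardinalities, to the pair $T=\{1\}$, $T'=\{1,2\}$. Here $T\cup T'=N$, so the hypothesis of \descref{SJS} collapses to the single equality $v(\{1\})=v(\{1,2\})$ --- exactly the situation that \descref{JSY} does not reach, because its conditions 2 and 3 fail. Feeding in the game $(0,0,1,0)$, which satisfies this hypothesis, \descref{SJS} gives $\phi_{\{1\}}=\phi_{\{1,2\}}$ on it; expanding both sides through the representation above forces $p^{\{1\}}(\{2\})=0$. The mirror game $(0,1,0,0)$ together with the pair $(\{2\},\{1,2\})$ gives $p^{\{2\}}(\{1\})=0$ in the same way. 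Now \descref{JEF}, applied to the generic game $(0,a_1,a_2,b)$ with these two vanishings substituted in, equates $p^{\{1\}}(\varnothing)a_1+p^{\{2\}}(\varnothing)a_2+p^{\{1,2\}}(\varnothing)\,b$ with $b$ for all reals $a_1,a_2,b$, hence $p^{\{1\}}(\varnothing)=p^{\{2\}}(\varnothing)=0$ and $p^{\{1,2\}}(\varnothing)=1$. Finally, applying \descref{SJS} once more to $(\{1\},\{1,2\})$ at the game $(0,1,0,1)$ yields $p^{\{1\}}(\varnothing)+p^{\{1\}}(\{2\})=p^{\{1,2\}}(\varnothing)$, i.e.\ $0=1$ --- the desired contradiction.

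The crux, and the part needing the most thought to present convincingly, is recognising that it is the inter-cardinality reach of \descref{SJS} --- relating a set to a strict superset --- that over-determines the weights: it forces the ``cross'' weight $p^{\{1\}}(\{2\})$ to vanish while \descref{JEF} still insists that unit mass be distributed, and the two requirements cannot be reconciled. Everything else is bookkeeping with finitely many linear equations, and I would close with a remark that the same mechanism defeats existence for every $n\ge 2$ with $k\ge 2$, so the non-existence is not an artefact of the smallest case.
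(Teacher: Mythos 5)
Your proposal is correct and follows essentially the same route as the paper's own proof: restrict to $N=\{1,2\}$, invoke the representation $\phi_T(v)=\sum_{S\subseteq N\setminus T}p^T(S)[v(S\cup T)-v(S)]$ from Propositions~\ref{P:probdecomp} and~\ref{P:jef}, and apply \descref{SJS} to the cross-cardinality pair $(\{1\},\{1,2\})$ on explicit games to derive incompatible linear constraints on the weights. The only difference is bookkeeping --- you choose games that force the cross weights to vanish and reach $0=1$, whereas the paper's two games pin $p^{\{1,2\}}(\varnothing)$ to both $1/2$ and $1/3$ --- so the mechanism and the contradiction are the same in substance.
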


\begin{proof}Since $\phi$ satisfies \descref{JLI}, \descref{JNU}, and \descref{JEF}, by Proposition~\ref{P:jef},
\[
  \phi_T(v)=\sum_{S\subseteq N\setminus T }p^T(S)[v(S\cup T)-v(S)]
\]
with $\{p^T(S)\}$ satisfying \eqref{e:probeqn}, for any game $v\in\mathcal{G}^N$.
We consider two games  $v_1,v_2\in\mathcal{G}^{\{1,2\}}$. As $N=\{1,2\}$, \eqref{e:probeqn} gives $p^{\{1\}}(\varnothing)=p^{\{2\}}(\{1\})$, $p^{\{2\}}(\varnothing)=p^{\{1\}}(\{2\})$, and $p^{\{1,2\}}(\varnothing)+p^{\{1\}}(\varnothing)+p^{\{2\}}(\varnothing)=1$.

Suppose $v_1(\{1\})=v_1(\{1,2\})=1$, $v_1(\{2\})=0$. \descref{SJS} thus gives that $\phi_{\{1\}}(v_1)=\phi_{\{1,2\}}(v_1)$, i.e. $p^{\{1,2\}}(\varnothing)=p^{\{1\}}(\varnothing)+p^{\{2\}}(\varnothing)$ which implies $p^{\{1,2\}}(\varnothing)=1/2$.

Suppose also that $v_2(\{1\})=v_2(\{1,2\})=v_2(\{2\})=1$. \descref{SJS}  gives that $\phi_{\{1\}}(v_2)=\phi_{\{2\}}(v_2)=\phi_{\{1,2\}}(v_2)$, i.e. $p^{\{1,2\}}(\varnothing)=p^{\{1\}}(\varnothing)=p^{\{2\}}(\varnothing)$ which implies $p^{\{1,2\}}(\varnothing)=1/3$, giving a contradiction.\qedhere

\end{proof}

Thus, \descref{SJS} is too strong a notion of symmetry, imposing linear restrictions on sets of unequal sizes.

\end{document}